\documentclass{article}

\usepackage{microtype}
\usepackage{graphicx}
\usepackage{subcaption}
\usepackage{booktabs} 

\usepackage{hyperref}

\usepackage[accepted]{icml2026}

\usepackage{amsmath}
\usepackage{amssymb}
\usepackage{mathtools}
\usepackage{amsthm}

\usepackage[utf8]{inputenc}
\usepackage[T1]{fontenc}
\usepackage{bm}
\usepackage{bbm}
\usepackage{eso-pic}   
\usepackage{titling}   
\usepackage{tabularx}
\usepackage{multirow}
\usepackage{enumitem}
\usepackage{amsfonts}
\usepackage{newunicodechar}
\newunicodechar{）}{)}
\usepackage{float}
\usepackage{placeins}
\usepackage{needspace} 
\usepackage[most]{tcolorbox}

\usepackage[capitalize,noabbrev]{cleveref}

\theoremstyle{plain}
\newtheorem{theorem}{Theorem}[section]
\newtheorem{proposition}[theorem]{Proposition}
\newtheorem{lemma}[theorem]{Lemma}

\theoremstyle{definition}

\theoremstyle{remark}
\newtheorem{remark}[theorem]{Remark}

\usepackage[disable,textsize=tiny]{todonotes}

\newenvironment{tighteq}{%
  \setlength{\abovedisplayskip}{6pt}%
  \setlength{\belowdisplayskip}{6pt}%
  \setlength{\abovedisplayshortskip}{0pt}%
  \setlength{\belowdisplayshortskip}{3pt}%
}{}

\graphicspath{{fig/}}

\newcommand{\E}{\mathbb{E}}

\newcommand{\KL}{\mathrm{KL}}

\icmltitlerunning{Amortized Generalized Bayesian Inference}

\begin{document}

\twocolumn[
  \icmltitle{Amortized Simulation-Based Inference in Generalized\\ Bayes via Neural Posterior Estimation}



  \icmlsetsymbol{equal}{*}

  \begin{icmlauthorlist}
    \icmlauthor{Shiyi Sun}{yyy}
    \icmlauthor{Geoff K. Nicholls}{yyy}
    \icmlauthor{Jeong Eun Lee}{zzz}
  \end{icmlauthorlist}

  \icmlaffiliation{yyy}{Department of Statistics, University of Oxford, Oxford, United Kingdom}
  \icmlaffiliation{zzz}{Department of Statistics, University of Auckland, Auckland, New Zealand}
  
  \icmlcorrespondingauthor{Shiyi Sun}{shiyi.sun@spc.ox.ac.uk}

  \icmlkeywords{Generalized Bayes, power posterior, neural posterior estimation, simulation-based inference, amortization, likelihood-free inference.}

  \vskip 0.3in
]



\printAffiliationsAndNotice{}  

\begin{abstract}
\noindent Generalized Bayesian Inference (GBI) tempers a loss with a temperature $\beta>0$ to 
mitigate overconfidence and improve robustness under model misspecification, but existing GBI methods typically rely on costly MCMC or SDE-based samplers and must be re-run for each new dataset and each $\beta$-value. We give the first fully amortized variational approximation for the specific case of the tempered posterior family $p_\beta(\theta\! \mid\! x) \propto \pi(\theta)p(x\! \mid\! \theta)^\beta$ by training a single $(x,\beta)$-conditioned neural posterior estimator $q_\phi(\theta \mid x, \beta)$ that enables sampling in a single forward pass, without simulator calls or 
inference-time MCMC. We introduce two complementary training routes: (i) synthesizes off-manifold samples $(\theta, x) \sim \pi(\theta)p(x \mid \theta)^\beta$ and (ii) reweights a fixed base dataset $\pi(\theta)p(x \mid \theta)$ using 
self-normalized importance sampling (SNIS), where we show that the SNIS-weighted objective provides a consistent forward-KL fit to the tempered posterior with finite weight variance.
Across four standard simulation-based inference (SBI) benchmarks—including the chaotic Lorenz–96 system—our $\beta$-amortized estimator achieves competitive posterior approximations, in standard two-sample metrics, with non-amortized MCMC-based power-posterior samplers over a wide range of temperatures. Code is available at \url{https://github.com/Komorebiww/amortized-generalized-bayes}.

\end{abstract}

\section{Introduction}
\label{sec:intro}
Simulation-based inference (SBI)~\citep{cranmer2020frontier} addresses Bayesian inference when the likelihood is implicit, yet a simulator can generate samples. Given parameters $\theta$, one can simulate $x \sim p(x \mid \theta)$ even though the likelihood $p(x \mid \theta)$ is intractable~\citep{lueckmann2021benchmarking}. This setting is common in scientific and engineering domains where complex mechanistic models can produce synthetic data, but evaluating or differentiating the likelihood is infeasible~\citep{alsing2019fast,lueckmann2017flexible}. Methods span from approximate Bayesian computation (ABC)~\citep{beaumont2002approximate,marjoram2003markov} to neural posterior estimation (NPE)~\citep{greenberg2019automatic,papamakarios2016fast} and neural likelihood/ratio estimation (NLE/NRE)~\citep{papamakarios2019sequential,hermans2020likelihood,durkan2020contrastive,thomas2022likelihood}. Many modern approaches further employ amortized inference~\citep{greenberg2019automatic,papamakarios2019sequential,durkan2020contrastive,gonccalves2020training,radev2020bayesflow,elsemuller2024sensitivityaware}, training neural networks that map simulated data directly to posterior or likelihood surrogates. Amortization enables efficient reuse across multiple observations and substantially lowers per-dataset inference cost compared with classical sampling-based methods~\citep{lueckmann2021benchmarking}, but it typically presumes a well-specified simulator and that targeting the exact Bayesian posterior is desirable. 

\noindent However, in practice, simulators are imperfect representations of reality: model misspecification and measurement noise can cause the Bayesian posterior to concentrate on pseudo-true regions that need not correspond to meaningful or stable solutions~\citep{berk1966limiting,kleijn2006misspecification,grunwald2017inconsistency}. 
To address such issues, generalized Bayesian inference (GBI)~\citep{bissiri2016general} updates beliefs using a loss-based surrogate in place of the likelihood. This approach can be rigorously justified through the lens of optimization-centric variational inference~\citep{knoblauch2022optimization}, providing a principled route to robustness. Concretely, following~\citep{bissiri2016general}, we write
\[
L(\theta; x_{\text{obs}}) \equiv \exp\{-\beta\, l(\theta; x_{\text{obs}})\},
\]
where $l(\theta; x_{\text{obs}})$ measures the quality of parameter $\theta$ relative to the observation and $\beta>0$ is a temperature that balances data fit against the prior. With a prior $\pi(\theta)$ and simulator-defined likelihood $p(x\mid\theta)$, the corresponding generalized posterior is
\begin{equation}\label{eq:GB-loss-post}
p(\theta \mid x_{\text{obs}}) \propto \exp\{-\beta\, l(\theta; x_{\text{obs}})\}\,\pi(\theta).
\end{equation}
Choosing $l(\theta; x_{\text{obs}}) = -\log p(x_{\text{obs}}\mid\theta)$ yields the tempered (power) posterior~\citep{friel2008marginal,marinari1992simulated}, which preserves the likelihood-based structure while introducing a tunable robustness via $\beta$ (down-weighting data for $\beta<1$, up-weighting for $\beta>1$). Related robustness perspectives include~\citep{holmes2017assigning,lyddon2019general}, and practical choices of $\beta$ can be calibrated to achieve nominal frequentist properties~\citep{syring2019calibrating}.

\noindent While generalized bayes provides a principled route to robustness, applying it to simulator-based settings remains challenging: evaluating the loss or scoring-rule expectations typically requires many simulator queries at inference time. Two recent strands address this gap. (i) Scoring-rule posteriors: \citet{pacchiardi2024generalized} replace the likelihood by strictly proper scoring rules (e.g., energy and kernel scores) to obtain a generalized posterior with consistency and outlier-robustness guarantees; inference is performed via pseudo-marginal MCMC or stochastic-gradient MCMC and is not amortized across observations. (ii) Amortized cost estimation (ACE): \citet{gao2023generalized} train a neural network to approximate the expected cost used in the GBI update, thereby amortizing cost evaluation and avoiding simulator calls at test time; samples from the generalized posterior are then obtained with MCMC over parameters. Both strands reduce reliance on explicit likelihoods but still require inference-time sampling over $\theta$, which limits throughput when many $x_{\text{obs}}$ must be processed or computational resources are constrained. This motivates our approach of directly amortizing the family of power posteriors $p_\beta(\theta\mid x)$ with NPE.

\noindent \textbf{Contribution.} Building on generalized Bayes, we focus on the power posterior
\begin{equation}
    p_\beta(\theta\mid x)\ \propto\ \pi(\theta)\, p(x\mid\theta)^{\beta},
    \label{eq:tempered}
\end{equation}
which interpolates between the prior ($\beta\!\to\!0$) and the standard posterior ($\beta\!=\!1$). 
Whereas existing approaches typically require inference-time MCMC or SDE/score-based samplers to draw from $p_\beta(\theta\mid x)$, we propose a fully amortized alternative over both $x$ and $\beta$ that directly learns the family of power posteriors via neural posterior estimation (NPE). Concretely, we train a $\beta$-conditioned estimator $q_\phi(\theta\mid x,\beta)$ so that sampling from $p_\beta(\theta\mid x)$ at test time reduces to a single forward pass, without simulator calls or any MCMC.
After training, users can sweep $\beta$ to inspect posterior stability, posterior-predictive behavior, and ESS diagnostics, or plug the amortized sampler into existing $\beta$-calibration procedures without rerunning inference for each temperature.

\medskip
\noindent We instantiate this idea through two complementary training routes:

\begin{itemize}[leftmargin=1.2em,topsep=0.25em,itemsep=0.2em]
\item \textbf{Route A — Tempered-joint synthesis.}
We build an offline dataset of $(\theta,x,\beta)$ by approximately sampling from the (unnormalized) tempered joint density, $p_\beta(\theta, x)\propto \pi(\theta)\,p(x\mid\theta)^{\beta}$. 
To approximate this joint, we learn the joint score $\nabla_{(\theta,x)}\log\{\pi(\theta)p(x\mid\theta)^{\beta}\}$ and use short-run Langevin dynamics across a temperature schedule to synthesize pairs. Training NPE on these pairs lets the amortized model directly learn $p_\beta(\theta\mid x)$. 

\item \textbf{Route B — SNIS-weighted NPE (global per-\(\beta\) normalization).}
Draw a base dataset \((\theta_i,x_i)_{i=1}^N \sim \pi(\theta)p(x\mid\theta)\) once and reuse it across temperatures. 
For any \(m(x)>0\), define the tempered joint
\(\pi_\beta(\theta,x)\propto \pi(\theta)\,p(x\mid\theta)^{\beta} m(x)\),
whose conditional equals \(p_\beta(\theta\mid x)\).
Using the base joint as proposal, the self-normalized importance weight is
\[
w_{\beta}(\theta,x)\ =\ \frac{\pi_\beta(\theta,x)}{\pi(\theta)p(x\mid\theta)}
\ =\ p(x\mid\theta)^{\beta-1} m(x).
\]
We fit a single amortized \(q_\phi(\theta\mid x,\beta)\) by a per-temperature SNIS-weighted NPE objective.

We then minimize the objective
\[
\mathcal{L}_{\text{SNIS}}(\phi)
=
-\sum_{\beta\in\mathcal{B}} \sum_{i=1}^N
\tilde w_{\beta,i}\,\log q_\phi(\theta_i\mid x_i,\beta).
\]
where $\tilde w_{\beta,i}\propto w_{\beta}(\theta_i,x_i)$.
Two practical instantiations are:
(i) NLE-based with \(m(x)=1\), plugging a learned \(\hat p_\eta(x\mid\theta)\) for \(p(x\mid\theta)\);
(ii) NRE-based with \(m(x)=p(x)^{1-\beta}\), where a classifier ratio \(\hat r_\psi(x\mid\theta)\!\approx\!p(x\mid\theta)/p(x)\) yields \(w_\beta(\theta,x)\propto \hat r_\psi(\theta,x)^{\,\beta-1}\).
\end{itemize}
Route B resembles Sequential Neural Variational Inference (SNVI) \citep{glockler2022variational}: SNVI makes a variational NPE-approximation to a posterior; the likelihood used to form the target-posterior is an NLE-approximation made using SBI. The NPE is not amortized. We target GBI and amortize data and $\beta$ but similarly have an NPE objective targeting an NLE approximation.

\section{Problem Setup}
\label{sec:setup}

Following the fully amortized objective introduced above, we now formalize the learning problem.
Let $\Theta\!\subseteq\!\mathbb{R}^{d_\theta}$ and $\mathcal{X}\!\subseteq\!\mathbb{R}^{d_x}$ be measurable spaces, $\pi(\theta)$ a prior, and $x\!\sim\!p(\cdot\mid\theta)$ an implicit simulator. For $\beta\in\mathcal{B}$, define the power posterior in \eqref{eq:tempered} with normalizer $Z_\beta(x)=\int_{\Theta}\pi(\theta)\,p(x\mid\theta)^{\beta}\,\mathrm{d}\theta<\infty$. Our goal is to learn a single amortized estimator $q_\phi(\theta\mid x,\beta)$ that approximates $p_\beta(\theta\mid x)$ for any $\beta$ in a prescribed set $\mathcal{B}$ (a discrete grid or a closed interval). At test time, $\beta$ is user-specified; during training, $\beta\!\sim\!p(\beta)$ supported on $\mathcal{B}$ to ensure coverage.

\paragraph{Temperature constraints.}
As $\beta\downarrow 0$, $p_\beta(\theta\mid x)\!\to\!\pi(\theta)$; at $\beta=1$ we recover standard Bayes; for $0<\beta<1$ the likelihood is down-weighted (typically improving robustness under misspecification), while $\beta>1$ up-weights the likelihood and often yields tighter posteriors under well-specified simulators. To avoid degeneracy and guarantee $Z_\beta(x)<\infty$, we restrict $\beta$ to a bounded domain $\mathcal{B}=[\beta_{\min},\beta_{\max}]$ (or its grid) with $0<\beta_{\min}\le 1\le \beta_{\max}<\infty$, chosen for numerical stability.

\section{Related work}
\noindent \textbf{Neural estimators for SBI.}
Modern SBI amortizes either the posterior, likelihood, or likelihood ratio from simulated pairs~\citep{papamakarios2016fast,greenberg2019automatic,papamakarios2019sequential,hermans2020likelihood,durkan2020contrastive,thomas2022likelihood}.
These approaches enable fast reuse across observations but typically target the standard Bayesian posterior ($\beta=1$); recent studies examine calibration/robustness under misspecification~\citep{cannon2022investigating}.

\medskip
\noindent \textbf{Generalized Bayes in SBI.}
Generalized Bayes replaces the likelihood by a loss-based update with a temperature $\beta$~\citep{bissiri2016general,holmes2017assigning,lyddon2019general}.
Within SBI, \citet{gao2023generalized} amortize the expected cost (ACE) and subsequently draw parameter samples via MCMC for each data set $x_0$ at test time. Algorithm~\ref{alg:ace_mcmc_simple} provides a brief summary of their procedure.
\citet{pacchiardi2024generalized} instead construct scoring-rule posteriors using strictly proper scoring rules (e.g., energy or kernel scores) and perform pseudo-marginal or SG-MCMC inference, but their approach is not amortized at test time.
\citet{carmona_scalable_2022} and \citet{battaglia2025amortisinghyperparametersgeneralisedbayesian} amortize GBI \emph{hyperparameters} using conditional normalizing flows and reverse-KL (rKL) variational inference, pointing to the difficulty of amortizing over data in GBI as a reason for relying on rKL minimization.
In contrast, our work directly amortizes the family $p_\beta(\theta\mid x)$ across both observations $x$ and the GBI hyperparameter $\beta$ within the SBI framework, thereby avoiding both test-time MCMC and retraining.

\begin{algorithm}[http!]
\caption{Gao et al. (2023): Generalized Bayes via ACE + MCMC }
\label{alg:ace_mcmc_simple}
\textbf{Inputs:} prior $\pi(\theta)$, simulator $p(x\mid\theta)$, loss $\ell$, temperature $\beta$.
\begin{algorithmic}[1]
\STATE Learn $C_\phi(\theta,x)\approx \mathbb{E}_{x'\sim p(\cdot\mid\theta)}[\ell(x',x)]$ ($x$-amortized surrogate) by regression on pairs $(\theta,x)$ (ACE).
\STATE Define $\log \tilde p_\beta(\theta\mid x_0)=\log\pi(\theta)-\beta\, C_\phi(\theta,x_0)$ (unnormalized).
\STATE Run MCMC targeting $\tilde p_\beta(\theta\mid x_0)$ to draw $\{\theta^{(s)}\}$.
\end{algorithmic}
\end{algorithm}

\medskip
\noindent \textbf{Score-based synthesis.}
Score-based models learn the score $\nabla\log p$ via denoising score matching~\citep{JMLR:v6:hyvarinen05a,vincent2011connection,song2019generative}. A closely related alternative is to learn a conditional posterior score
$s_\psi(\theta,t; x)\approx \nabla_\theta \log p_t(\theta\mid x)$
(equivalently, a conditional diffusion model in $\theta$ given $(x,\beta)$), so that score evaluation amortizes jointly over observations and inverse temperatures.
However, unlike a one-shot posterior sampler, generating samples at test time still requires running an iterative reverse-time procedure---e.g., predictor--corrector sampling or short-run Langevin dynamics---for each new observation $x$ (and each $\beta$)~\citep{geffner2023compositional,sharrock2022sequential}, which can dominate inference latency.

Route~A (Section~\ref{subsec:routeA}) uses a learned joint score over $(\theta,x)$ purely as an offline generator (short-run Langevin across $\beta$) to create tempered training pairs, while inference uses a standard amortized posterior network. While we implement NCSN for concreteness, the route is agnostic to the scoring paradigm: VE/VP score-SDE frameworks~\citep{song2020score} and EDM~\citep{karras2022elucidating} can be dropped in as alternatives (replacing the noise scale by a time parameter), with predictor--corrector or short-run Langevin samplers yielding the same NPE training objective.

\medskip
\noindent \textbf{IS-aware/forward-KL objectives.}
For a fixed temperature $\beta$, our objective 
\[\mathcal{L}_\beta(\phi)=\mathbb{E}_{(\theta,x)\sim \pi_\beta}[-\log q_\phi(\theta\mid x,\beta)]\]
is a forward-KL (fKL) fit $\mathbb{E}_{x}\,\mathrm{KL}\!\big(p_\beta(\theta\mid x)\,\|\,q_\phi(\theta\mid x,\beta)\big)$—a mass-covering criterion~\citep{minka2005divergence,hernandez2016black,li2016renyi}.
Because $\pi_\beta(\theta,x)\propto\pi(\theta)\,p(x\mid\theta)^\beta\, m(x)$ can't be sampled directly, we estimate $\mathcal{L}_\beta$ with self-normalized importance sampling (SNIS)~\citep{murphy2012machine}, using the joint $\pi(\theta)p(x\mid\theta)$ as proposal. 
The resulting weighted MLE, 
$\widehat{\mathcal{L}}_\beta(\phi)=\sum_i \tilde w_{i,\beta}\,[-\log q_\phi(\theta_i\mid x_i,\beta)]$,
has gradients that equal the SNIS estimate of the fKL gradient. 
This connects Route~B to a long line of work on proposal learning by fKL and SNIS, including adaptive/mixture-boosting IS~\citep{cappe2008adaptive,cornuet2012adaptive,jerfel2021variational} and the wake/sleep family where the inference network $q$ is trained by $\mathrm{KL}(p\|q)$ using (re)weighted samples~\citep{hinton1995wake,bornschein2014reweighted}.
In our setting, the “proposal” being refined is the conditional $q_\phi(\theta\mid x,\beta)$ (e.g., a Mixture Density Network), giving an amortized fKL refinement across both $x$ and $\beta$.


\section{Amortized Power Posteriors via NPE}
\label{sec:method}
We amortize the family of power posteriors \eqref{eq:tempered} by training a single $\beta$-conditioned NPE $q_\phi(\theta\mid x,\beta)$, enabling single-pass sampling for any $(x,\beta)$ without simulator calls or inference-time MCMC. We realize this via two routes: (A) score-assisted synthesis of tempered triplets $(\theta,x,\beta)$; (B) reuse of a base joint dataset with per-$\beta$ self-normalized importance weights (NLE/NRE instantiations). Implementation details and trade-offs follow.

\subsection{Route A: score-assisted tempered synthesis + NPE}
\label{subsec:routeA}

\begin{algorithm}[!http]
\small
\caption{Route A: score-assisted tempered synthesis + NPE }
\label{alg:routeA}
\begin{algorithmic}[1]
\REQUIRE prior $\pi(\theta)$, simulator $p(x\mid\theta)$, annealed noise scales $\sigma_1>\cdots>\sigma_T$, base step-size scale $\epsilon_\eta$, $\beta$-grid $\mathcal{G}$ defining $p(\beta)$
\ENSURE amortized posterior $q_\phi(\theta\mid x,\beta)$
\STATE \textbf{Phase I: learn joint score $s_\psi$ (DSM).}
\FOR{\text{minibatches } $(\theta,x)\sim \pi(\theta)p(x\mid\theta)$}
  \STATE sample $\sigma\in\{\sigma_t\}$, $\epsilon\sim\mathcal N(0,I)$; set $(\tilde\theta,\tilde x)=(\theta,x)+\sigma\epsilon$
  \STATE update $\psi$ by minimizing $\|\,s_\psi(\tilde\theta,\tilde x,\sigma)+\epsilon/\sigma\,\|^2 \times\frac{1}{\sigma}$
\ENDFOR
\STATE \textbf{Phase II: synthesize tempered pairs (short-run Langevin).}
\FOR{$\beta \sim p(\beta)$}
  \STATE initialize $(\theta,x)\sim \pi(\theta)p(x\mid\theta)$
  \FOR{$j=1,\dots,T$ \textbf{(anneal over noise scales)}}
    \STATE $\eta_j \leftarrow \epsilon_\eta\,\sigma_j^2/\sigma_T^2$ \quad \(\eta_j\) is fixed for this noise scale
    \FOR{$k=0,\dots,K_j-1$}
      \STATE $g_{j,k} \leftarrow \beta\, s_\psi(\theta,x,\sigma_j)\ -\ (\beta-1)\,(s_\pi(\theta),0)$ \\ with $s_\pi=\nabla_\theta\log\pi$
      \STATE $(\theta,x) \leftarrow (\theta,x) + \eta_j g_{j,k} + \sqrt{2\eta_j}\,\xi_{j,k}$,\\ $\xi_{j,k}\sim\mathcal N(0,I)$
    \ENDFOR
  \ENDFOR
  \STATE add $(\theta,x,\beta)$ to dataset $\mathcal D$
\ENDFOR
\STATE \textbf{Phase III: Train NPE.}
\STATE  Minimize $\displaystyle \min_\phi\; \mathbb E_{(\theta,x,\beta)\in\mathcal D}\big[-\log q_\phi(\theta\mid x,\beta)\big]$
\end{algorithmic}
\end{algorithm}

\noindent A natural way to train a NPE targeting power posterior~\eqref{eq:tempered} is to construct a training set of $(\theta,x,\beta)$ triples which is distributed as the tempered joint 
\begin{tighteq}
\begin{equation}
    \tilde p_\beta(\theta,x)\ \propto\ \pi(\theta)\,p(x\!\mid\!\theta)^\beta
    \label{eq:tempered-joint}
\end{equation}
\end{tighteq}

\noindent when conditioned on $\beta$. If $c_\beta(\theta)=\int p(x\!\mid\!\theta)^\beta\,dx$ then the marginal, $\tilde p_\beta(\theta)\propto \pi(\theta)\, c_\beta(\theta)$, is not the prior (except at $\beta=1$) and $\tilde p_\beta(x|\theta)=p(x\!\mid\!\theta)^\beta/c_\beta(\theta)$ has unwanted $\theta$ and $\beta$ dependence, but $c_\beta(\theta)$ cancels in the joint $\tilde p_\beta(\theta)\tilde p_\beta(x|\theta)$, so MCMC targeting \eqref{eq:tempered-joint} gives pairs $(\theta,x)$ with the property that $\theta|x\sim p_\beta(\theta|x)$ in \eqref{eq:tempered}.

\noindent We first learn a joint score $s_\psi(\theta,x)\approx\nabla_{(\theta,x)}\log p(\theta,x)$ using NCSN \citep{song2019generative} on samples from the base joint distribution $p(\theta,x)=\pi(\theta)p(x\!\mid\!\theta)$. We then synthesize tempered pairs by running Langevin dynamics targeting tempered joint distribution $\tilde p_\beta(\theta,x)$:
\begin{equation}
\begin{aligned}
(\theta^{k+1},x^{k+1})
&\leftarrow
(\theta^{k},x^{k})
+ \eta\,\underbrace{\nabla_{(\theta,x)}\log \tilde p_\beta(\theta^{k},x^{k})}_{\text{use score + prior + }\beta}\\
&\qquad +\quad \sqrt{2\eta}\,\xi^k.
\end{aligned}
\label{eq:tempered-langevin}
\end{equation}
with $\xi^k\sim\mathcal{N}(0,I)$ and a step size $\eta$ chosen according to the annealed noise scale.
\noindent The joint score $s(\theta,x)=\nabla_{(\theta,x)}\log p(\theta,x)$ is
\begin{tighteq}
{\small
\begin{equation}
\begin{aligned}
\nabla_{(\theta,x)}\log p(\theta, x)
&=
\nabla_{(\theta,x)}\log\pi(\theta)
+\nabla_{(\theta,x)}\log p(x\mid \theta)\\
&=
\begin{pmatrix}
\nabla_{\theta}\log\pi(\theta)\\
\nabla_{x}\log\pi(\theta)
\end{pmatrix}
+
\begin{pmatrix}
\nabla_{\theta}\log p(x\mid\theta)\\
\nabla_{x}\log p(x\mid\theta)
\end{pmatrix}.
\end{aligned}
\end{equation}
}
\end{tighteq}
\noindent To obtain $\nabla_{(\theta,x)}\log \tilde p_\beta(\theta,x)$, we can rewrite it in the following way:
{\small
\begin{equation}
\begin{aligned}
\nabla_{(\theta,x)}\log \tilde p_\beta(\theta,x)
&= \nabla_{(\theta,x)}\log\pi(\theta)
   \;+\; \beta\,\nabla_{(\theta,x)}\log p(x\mid \theta)\\
&= \beta\,\nabla_{(\theta,x)}\log p(\theta,x) \\
&\quad -(\beta-1)\big(\nabla_\theta\log\pi(\theta),\,\mathbf{0}\big)\\
&\approx \beta\, s_\psi(\theta,x)
   \;-\;(\beta-1)\big(\nabla_\theta\log\pi(\theta),\,\mathbf{0}\big).
\end{aligned}
\label{eq:routeA-gradient}
\end{equation}
}

\noindent where in the first equality we substitute \eqref{eq:tempered-joint}, in the second we note $\nabla_x\log\pi(\theta) = 0$ and apply Bayes’ rule and finally $s_\psi(\theta,x)\approx s(\theta,x)$ from Algorithm~\ref{alg:routeA}.

\paragraph{Training objective.}
Once a large training set $\mathcal{D}_\beta=\{(\theta,x)\}$ is generated for multiple $\beta$ values (sampled from some $p(\beta)$), we fit $q_\phi$ as a conditional MLE:
\begin{equation}
\min_{\phi}\ \E_{\beta\sim p(\beta)} \E_{(\theta,x)\sim \mathcal{D}_\beta}\Big[-\log q_\phi(\theta\mid x,\beta)\Big].
\label{eq:npe-mle}
\end{equation}

\noindent The algorithm for Route A is given in Algorithm~\ref{alg:routeA}, more details about the selection of hyperparameters are in Appendix~\ref{details:routeA}.

\subsection{Route B: Using SNIS to reweight the NPE (no scores, no MCMC)}
\label{subsec:routeB}

On the joint space, consider the tempered family
\begin{equation}
\pi_\beta(\theta,x)\ \propto\ \pi(\theta)\,p(x\mid\theta)^{\beta}\,m(x),\qquad m(x)>0,
\end{equation}
for which the conditional is unchanged for any $m(x)$:
\(
\pi_\beta(\theta\mid x)=p_\beta(\theta\mid x)\propto \pi(\theta)\,p(x\mid\theta)^{\beta}.
\)
Here $m(x)$ will be used to accommodate different likelihood estimators. To amortize $p_\beta(\theta\mid x)$ with a $\beta$-conditioned NPE $q_\phi(\theta\mid x,\beta)$, we would like to minimize
\begin{equation}
\mathcal{L}_\phi\ :=\ \E_{(\theta,x)\sim \pi_\beta}\big[-\log q_\phi(\theta\mid x,\beta)\big].
\end{equation}
Sampling from $\pi_\beta(\theta,x)$ is not possible, so we use self-normalized importance sampling (SNIS)~\citep{murphy2012machine} to reweight the base joint
$p(\theta,x)=\pi(\theta)p(x\mid\theta)$. The unnormalized joint importance weight is
\begin{equation}
w_{\beta}(\theta,x)
\;=\;
\frac{\pi(\theta)\,p(x\mid\theta)^{\beta}\,m(x)}{\pi(\theta)\,p(x\mid\theta)}
\;=\;
p(x\mid\theta)^{\beta-1}\,m(x).
\label{eq:snvi-weight}
\end{equation}

\begin{tighteq}
\noindent \textbf{Two practical choices for $m(x)$.}
\begin{itemize}
\item $m(x)=1$ (\textbf{NLE-based}): $\  w_\beta(\theta,x)=p(x\mid\theta)^{\beta-1}$, approximated by $\hat p_\eta(x\mid\theta)$, a conditional likelihood trained using an NLE~\citep{papamakarios2019sequential}.
\item $m(x)=p(x)^{1-\beta}$ (\textbf{NRE-based}): the weights are $\ w_\beta(\theta,x)=\big(\tfrac{p(x\mid\theta)}{p(x)}\big)^{\beta-1}$, which can be acquired from a classifier-based likelihood-ratio estimator $\hat r_\psi(x\mid\theta)$ (NRE) via
\(
\hat r_\psi=\tfrac{d_\psi}{1-d_\psi}
\) with equal class priors~\citep{hermans2020likelihood}.
\end{itemize}
\end{tighteq}

\noindent In both cases, we normalize the weights $\tilde w_{\beta}(\theta,x)= \frac{w_{\beta}(\theta,x)}{ \sum w_{\beta}(\theta,x)}$. By the self-normalized importance sampling identity~\cite{murphy2012machine}, the objective is
\begin{equation}
\label{routeB:loss}
\mathcal{L}_\phi 
=\frac{\mathbb{E}_{(\theta,x)\sim p(x,\theta)}[\,w_\beta(\theta,x)\,(-\log q_\phi(\theta\mid x,\beta)\,]}
       {\mathbb{E}_{(\theta,x)\sim p(x,\theta)}[\,w_\beta(\theta,x)\,]}.
\end{equation}

\noindent We estimate \eqref{routeB:loss} with 
\begin{equation}
\begin{aligned}
\widehat{\mathcal{L}}_\phi(\beta)
&=
\frac{\sum_{i=1}^N w_{\beta,i}\,\bigl(-\log q_\phi(\theta_i\mid x_i,\beta)\bigr)}
     {\sum_{i=1}^N w_{\beta,i}}\label{eq:loss-estimate-routeB}\\
&=
\sum_{i=1}^N \tilde w_{\beta,i}\,\bigl(-\log q_\phi(\theta_i\mid x_i,\beta)\bigr),
\end{aligned}
\end{equation}
where \(\tilde w_{\beta,i}\coloneqq \frac{w_{\beta,i}}{\sum_{j=1}^N w_{\beta,j}}\).

\noindent Minimizing \eqref{routeB:loss} is equivalent to minimizing the fKL from the tempered posterior to $q_\phi$, formalized below.

\begin{proposition}\label{prop1}
For any $\beta\in\mathcal{B}$,
\[
\arg\min_{\phi}\,\mathcal{L}_\phi
\;=\;
\arg\min_{\phi}\;
\mathbb{E}_{x}\!\Big[\KL\!\big(p_\beta(\theta\mid x)\,\|\,q_\phi(\theta\mid x,\beta)\big)\Big].
\]
\end{proposition}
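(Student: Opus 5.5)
We propose to show that the population SNIS objective \eqref{routeB:loss} equals the expected forward KL up to an additive constant that does not depend on $\phi$. Then the two argmin sets coincide, where the outer expectation $\mathbb{E}_x$ is taken with respect to the $x$-marginal of the tempered joint $\pi_\beta$. The argument has three steps.

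\emph{Step 1: the SNIS ratio is an expectation under $\pi_\beta$.} The ratio in \eqref{routeB:loss} is, by definition of the weights \eqref{eq:snvi-weight}, the expectation of $-\log q_\phi(\theta\mid x,\beta)$ under the normalized joint $\pi_\beta$. Indeed, write $p(\theta,x)=\pi(\theta)p(x\mid\theta)$, and assume absolute continuity of $\pi_\beta$ with respect to $p$, which holds since $m>0$. Then $w_\beta\,p(\theta,x)=\pi(\theta)p(x\mid\theta)^\beta m(x)$, and dividing by $\mathbb{E}_p[w_\beta]=\int\pi(\theta)p(x\mid\theta)^\beta m(x)\,d\theta\,dx=:Z$ gives
\[
\mathcal{L}_\phi=\mathbb{E}_{(\theta,x)\sim\pi_\beta}\bigl[-\log q_\phi(\theta\mid x,\beta)\bigr].
\]

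\emph{Step 2: disintegrate $\pi_\beta$.} We factor $\pi_\beta(\theta,x)=\pi_\beta(x)\,p_\beta(\theta\mid x)$, where the marginal is $\pi_\beta(x)=m(x)Z_\beta(x)/Z$. The conditional is $p_\beta(\theta\mid x)$ because $m(x)$ cancels, as noted in Section~\ref{subsec:routeB}. This gives
\[
\mathcal{L}_\phi=\mathbb{E}_{x\sim\pi_\beta(x)}\Bigl[\KL\bigl(p_\beta(\cdot\mid x)\,\|\,q_\phi(\cdot\mid x,\beta)\bigr)\Bigr]+\mathbb{E}_{x\sim\pi_\beta(x)}\bigl[H(p_\beta(\cdot\mid x))\bigr].
\]
This follows from adding and subtracting $\log p_\beta(\theta\mid x)$ inside the expectation. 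The entropy term is independent of $\phi$, so the argmins agree.

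\emph{Step 3: technical conditions.} This is the main obstacle: the decomposition requires each term to be finite. We need three things.
\begin{enumerate}
\item $Z<\infty$. For $m\equiv1$ this follows by Tonelli if $c_\beta(\theta)=\int p(x\mid\theta)^\beta dx$ is $\pi$-integrable. For $m=p(x)^{1-\beta}$, we would use Hölder's inequality for $\beta\le1$, and otherwise add an integrability assumption.
\item $Z_\beta(x)<\infty$. This is assumed in Section~\ref{sec:setup}.
\item The conditional entropy of $p_\beta(\theta\mid x)$ is finite under $\pi_\beta(x)$.
\end{enumerate}
We would state these as standing assumptions. We would also remark that if $\mathcal{L}_\phi=\infty$ for some $\phi$, then so is the KL, so the equality of argmins persists. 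Finally, we note that the claimed identity concerns the population objective. The consistency of the finite-sample estimator \eqref{eq:loss-estimate-routeB} follows separately from the strong law applied to the numerator and denominator.
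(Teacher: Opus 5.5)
Your proposal is correct and follows essentially the same route as the paper's proof. Like the paper, you rewrite the normalized weighted objective as an expectation under the tempered joint (the paper's Lemma), disintegrate it as $\tilde p_\beta(x)\,p_\beta(\theta\mid x)$, and split the cross-entropy into a $\phi$-independent expected conditional entropy plus the expected forward KL. Your explicit integrability conditions (finite $Z$, finite expected entropy) make precise what the paper's Fubini step takes for granted, and you rightly drop the paper's realizability assumption, which the argmin identity does not need. One small slip: $\pi_\beta \ll p$ holds because $\beta>0$, so $p(x\mid\theta)^\beta$ vanishes wherever $p(x\mid\theta)$ does; it does not follow from $m>0$.
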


\noindent The full proof is given in Appendix~\ref{prop:1}.

\noindent
In importance sampling, the dispersion of the weights is critical. For the NRE-based construction, with weights defined in \eqref{eq:snvi-weight}, we establish the following.

\begin{proposition}
\label{prop2}
The variance of the NRE weights is finite for $\beta\in[1/2,1]$:
\[
\operatorname{Var}[\,w_\beta\,]\;<\;1.\]
\end{proposition}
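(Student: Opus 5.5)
We work directly with the exact NRE weight, setting aside estimation error in $\hat r_\psi$. The weight is
\[
w_\beta(\theta,x)=\Big(\tfrac{p(x\mid\theta)}{p(x)}\Big)^{\beta-1}=r(\theta,x)^{\beta-1},\qquad r(\theta,x)=\frac{p(\theta,x)}{\pi(\theta)p(x)},
\]
and the variance is taken under the proposal $(\theta,x)\sim p(\theta,x)=\pi(\theta)p(x\mid\theta)$. We bound the second moment $\E_p[w_\beta^2]$ and subtract $(\E_p[w_\beta])^2$.

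The key change of measure moves everything to the product of marginals $\pi(\theta)p(x)$, under which $r$ is a density ratio with $\E_{\pi\otimes p}[r]=1$. For any exponent $a$,
\[
\E_p[r^{a}]=\E_{\pi\otimes p}[r^{a+1}].
\]
With $a=2(\beta-1)$ this gives
\[
\E_p[w_\beta^2]=\E_{\pi\otimes p}\big[r^{2\beta-1}\big].
\]
For $\beta\in[1/2,1]$ the exponent $2\beta-1\in[0,1]$. The map $t\mapsto t^{2\beta-1}$ is concave on $[0,\infty)$, so Jensen's inequality gives
\[
\E_{\pi\otimes p}[r^{2\beta-1}]\le(\E_{\pi\otimes p}[r])^{2\beta-1}=1.
\]
The endpoint $\beta=1/2$ gives exponent $0$, where $r^0=1$ on the set $\{r>0\}$. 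That set has full $p$-measure, so the bound is still $\le 1$. Hence $\E_p[w_\beta^2]\le 1$ and the variance is finite.

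To get the strict bound $\operatorname{Var}[w_\beta]<1$, we use $\operatorname{Var}=\E[w^2]-(\E w)^2$ and show $\E_p[w_\beta]>0$. This holds because $w_\beta>0$ wherever $p(x\mid\theta)>0$, which is $p$-almost surely. Explicitly,
\[
\E_p[w_\beta]=\E_{\pi\otimes p}[r^{\beta}]\in(0,1],
\]
where the upper bound is again Jensen, since $\beta\le1$. Therefore $\operatorname{Var}[w_\beta]\le 1-(\E_p w_\beta)^2<1$.

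The main obstacle is measure-theoretic care rather than algebra. Two points need attention:
\begin{itemize}
\item The change of measure needs $p(\theta,x)\ll\pi(\theta)p(x)$. This is automatic, because $p(x)>0$ wherever $p(\theta,x)>0$.
\item Negative powers of $r$ could blow up where $r$ is small, which is exactly why the restriction $\beta\ge 1/2$ is needed. Below $1/2$ the exponent $2\beta-1$ becomes negative, and $\E_{\pi\otimes p}[r^{2\beta-1}]$ can diverge.
\end{itemize}
A remark should note this, and also that the argument concerns the ideal weights, so it transfers to $\hat r_\psi$ only insofar as the classifier is accurate.
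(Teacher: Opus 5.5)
Your proof is correct and is essentially the paper's argument. After your change of measure, $\E_{\pi\otimes p}[r^{2\beta-1}]$ is exactly the paper's integral $\int\!\int \pi(\theta)p(x\mid\theta)^{2\beta-1}p(x)^{2-2\beta}\,dx\,d\theta$, and your Jensen step for the concave map $t\mapsto t^{2\beta-1}$ is the same inequality as the paper's H\"older bound with $u=1/(2\beta-1)$, $v=1/(2-2\beta)$ (applied per $\theta$ there, on the product of marginals in your version). Your write-up is marginally tidier in that it covers $\beta=1$ uniformly, and it explicitly justifies the strict inequality via $\E_p[w_\beta]>0$, which the paper uses without comment.
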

\vspace{-2mm}
\noindent The statement and proof are given in Appendix~\ref{finite variance}. When we estimate the loss in \eqref{eq:loss-estimate-routeB} the weights are independent over $i=1,\dots,n$. However we recycle the samples $\theta_i$ and $x_i$ over different $\beta$-values. This doesn't break the independence assumption needed for Proposition~\ref{prop2} as that result conditions on a single fixed $\beta$.

\noindent Both choices (NLE-based and NRE-based) instantiate the same SNVI target $\pi_\beta(\theta\mid x)$. We implement both and compare their performance on different benchmarks. Algorithm~\ref{alg:routeb-snis} gives the NRE-based SNIS procedure; the NLE-based variant is analogous, replacing the ratio estimator with a neural likelihood estimator. The neural architectures are given in Appendix~\ref{details:routeB}.

\enlargethispage{\baselineskip} 
\FloatBarrier  
\begin{algorithm}[!htbp]
\small
\caption{Route B: SNIS-weighted NPE (NRE-based)}
\label{alg:routeb-snis}
\begin{algorithmic}[1]
\REQUIRE prior $\pi(\theta)$, simulator $p(x\mid\theta)$, $\beta$-grid $\mathcal{G}$, ratio net $d_\psi$, NPE $q_\phi$
\ENSURE amortized posterior $q_\phi(\theta\mid x,\beta)$
\STATE \textbf{Base data:} draw $(\theta_i,x_i)\sim\pi(\theta)p(x\mid\theta)$ and negatives $(\tilde\theta_i,\tilde x_i)\sim\pi(\theta)p(x)$ for $i=1,\ldots,n$
\STATE \textbf{Train ratio (NRE-B):} minimize
 $\mathcal{L}_\psi=\tfrac{1}{n}\sum_i\!\big[-\log d_\psi(\theta_i,x_i)-\log(1-d_\psi(\tilde\theta_i,\tilde x_i))\big]$
\STATE compute $s_i=\log\hat r_\psi(x_i\mid\theta_i)$ with $\hat r_\psi=\dfrac{d_\psi}{1-d_\psi}$
\FOR[global log-normalizer per temperature]{$\beta \in \mathcal{G}$} 
  \STATE $S_\beta \leftarrow \log\sum_{i=1}^n\exp\!\big((\beta-1)s_i\big)$
  \STATE \hspace{1em} set $\tilde w_{\beta,i}=\exp\!\big((\beta-1)s_i - S_\beta\big)$ for $i=1,\ldots,n$
\ENDFOR
\STATE \textbf{Train NPE (SNIS weights):}
\STATE \hspace{1em} for $\beta\in\mathcal{G}$ and mini-batches $\mathcal I\subset\{1,\ldots,n\}$, minimize
$\mathcal{L}_\phi=\tfrac{n}{|\mathcal I|}\sum_{i\in\mathcal I}\tilde w_{\beta,i}\,\big[-\log q_\phi(\theta_i\mid x_i,\beta)\big]$\footnotemark

\end{algorithmic}
\end{algorithm}


\section{Experiments}

\subsection{Benchmark}
We evaluate $\beta$-amortized posterior inference by comparing samples from a single
$\beta$-conditioned estimator $q_\phi(\theta\mid x,\beta)$ to reference samples from the
corresponding power posterior $p_\beta(\theta\mid x)\propto \pi(\theta)p(x\mid\theta)^\beta$.

\begin{figure*}[!t]
  \centering
  \includegraphics[width=0.95\textwidth]{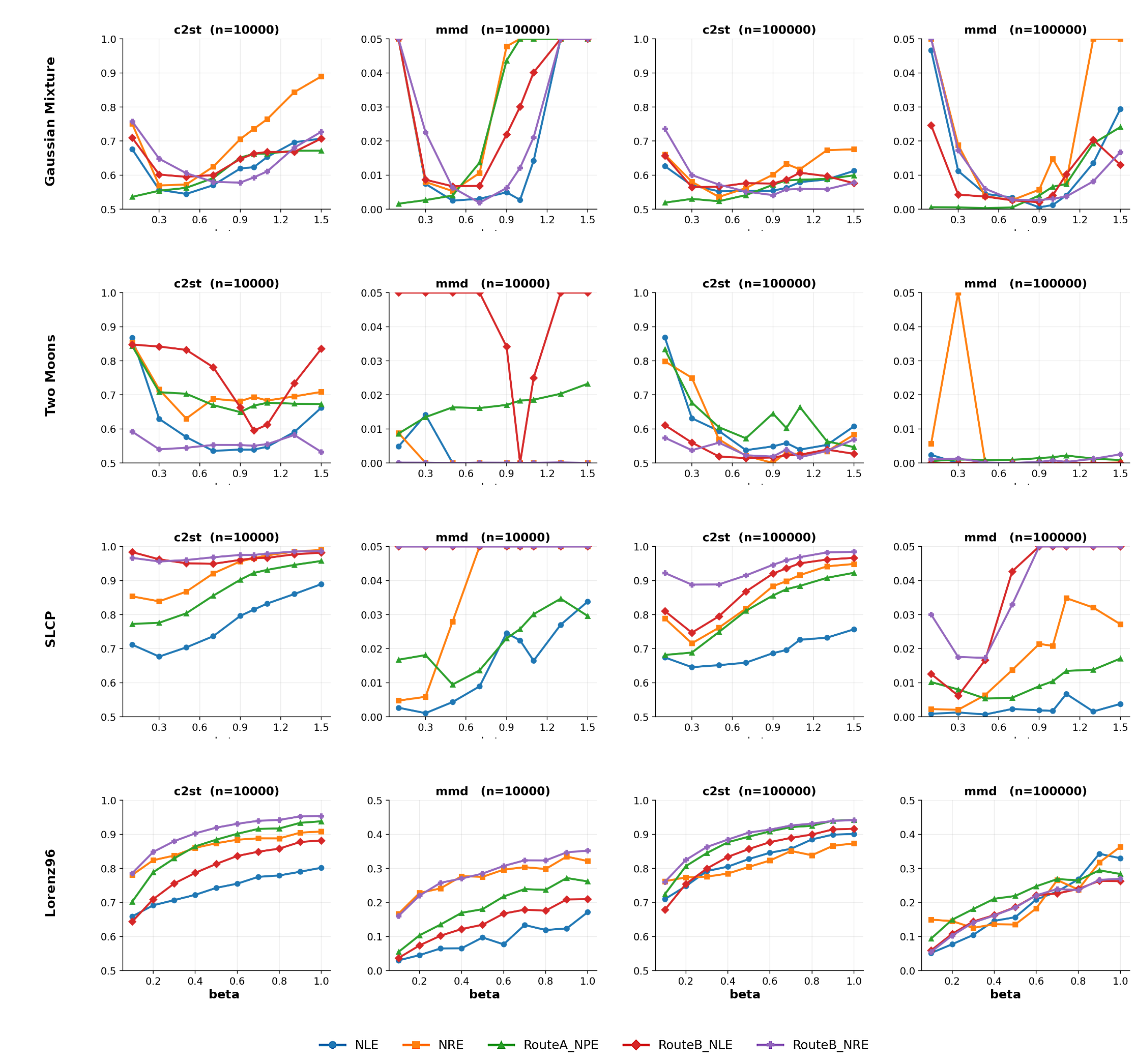}
  \caption{Route A/B comparison across benchmarks with a shared legend.}
  \label{fig:gm-tw}
\end{figure*}

\footnotetext{The mini-batch gradient in Step 9 is an unbiased estimator of the full-data
weighted objective. A complete proof is given in Appendix~\ref{pf:snis}.}

Experiments are conducted on four standard SBI benchmarks~\citep{lueckmann2021benchmarking,lorenz1996predictability} (Gaussian Mixture, Two Moons,
SLCP, and Lorenz--96), chosen to span increasing posterior complexity from low-dimensional
multimodality to chaotic dynamics. We report sample-based two-sample discrepancies (MMD~\citep{gretton2012kernel}
and C2ST~\citep{friedman2004multivariate,lopez2016revisiting}) as functions of $\beta$, and study the effect of simulation budget (10k vs.\ 100k)
to quantify both temperature-robustness and data-efficiency. We further compare Route~A
(score-assisted tempered-pair synthesis) and Route~B (SNIS-weighted NPE reweighting) to
evaluate approximation fidelity across amortization budgets and design choices.

\paragraph{Evaluation protocol.}
For each benchmark, a posterior evaluation is conditioned on a single held-out observation $x_{\mathrm{obs}}$; a task refers to one such observation together with its reference power posterior.
The curves in Figure~\ref{fig:gm-tw} are averaged over held-out tasks rather than being produced from a single favorable observation.
The symbol $n$ in the figure denotes the simulation budget, i.e., the number of simulator-generated training pairs used to fit the amortized estimator, not the number of datapoints inside an observation.
Although $q_\phi(\theta\mid x,\beta)$ treats $\beta$ as an input, we report results on a finite evaluation grid of $\beta$ values so that amortized samples can be compared directly with reference samplers at the same temperatures.
Full benchmark definitions, reference samplers, hyperparameters, and evaluation details are
provided in Appendix~\ref{app:benchmarks} and Appendix~\ref{app:true-posterior}.

\noindent We do not claim that the amortized methods are uniformly best. However, Figure~\ref{fig:gm-tw} shows that our $\beta$-conditioned estimator $q_\phi(\theta\mid x,\beta)$ achieves competitive approximation quality to the reference power posteriors $p_\beta(\theta\mid x)$ without inference-time MCMC.
The results should be interpreted through the two complementary diagnostics: MMD captures distributional discrepancies in kernel mean embeddings, while C2ST is often more sensitive to small but classifiable differences between sample sets.
Performance tends to be better in regimes where the tempered target remains well covered by the training distribution, and it can degrade in harder benchmarks or at extreme $\beta$ values where either the importance weights become concentrated or score-based synthesis becomes less stable.
\textbf{Route B} (SNIS-weighted NPE) can be an efficient choice when $\beta$ is not far from $1$ (especially for $\beta\le 1$), where reweighting is comparatively stable.
\textbf{Route A} (score-assisted synthesis) can be advantageous when SNIS reweighting becomes unreliable (e.g., ratio/likelihood miscalibration or ESS collapse at small $\beta$) or when the base joint undercovers tempered regions; it may improve support coverage at the cost of additional offline compute and potential bias from score error and short-run Langevin dynamics. Appendix~\ref{details:routeA} includes a step-size ablation on Gaussian mixture, illustrating the sensitivity of Route~A synthesis to the Langevin discretization. For additional intuition on how $\beta$ reshapes the posterior geometry, see Appendix~\ref{app:qualitative-beta}
(Figs.~\ref{fig:appendix-power-gmm}--\ref{fig:appendix-power-twomoons}).

Route~A appears relatively robust at small $\beta$ in the Gaussian mixture and SLCP tests, where the weight variance for Route~B SNIS methods increases. At $\beta<1$, SNIS-weighted NPE approximates a diffuse target using samples from a more concentrated base joint, so the Route~B curves can rise in both C2ST and MMD as $\beta$ decreases. This interpretation is supported by the Route~B effective-sample-size diagnostics in Appendix~\ref{app:routeb-ess}, where ESS is highest near $\beta=1$ and drops as $\beta$ moves away from the base proposal, especially for SLCP and Lorenz--96. The methods generally show higher discrepancies on more structured posterior geometries and in challenging temperature regimes. Route~A generates training samples via short-run annealed Langevin dynamics, which can help with support exploration but is sensitive to score error and multimodal geometry, as seen in Two Moons. Overall, the amortized methods are broadly comparable to the non-amortized references while making posterior queries for many $\beta$ values substantially cheaper after training.

\subsection{Single-Compartment Hodgkin--Huxley }
\label{sec:hh}

We evaluate our method on a challenging scientific simulator, the single-compartment Hodgkin--Huxley (HH) model of neuronal voltage dynamics \citep{teeter2018generalized,pospischil2008minimal}. 
This example is not a ground-truth recovery benchmark but a first step toward a more realistic, computationally expensive, misspecified scientific-simulator setting, where amortization is practically useful.
Following prior work \citep{gonccalves2020training}, the simulator has eight parameters and we use seven summary statistics computed from the voltage trace as observations. Full HH model details (parameter definitions, priors, and summary statistics) are provided in Appendix~\ref{app:hh_model}. We perform inference for real electrophysiological recordings from the Allen Cell Types Database
\citep{allen_ctdb_2016}, a setting known to be difficult due to model mismatch and experimental variability\citep{gao2023generalized,tolley2024methods}.

\begin{figure}[t]
  \centering
  \includegraphics[width=0.37\textwidth]{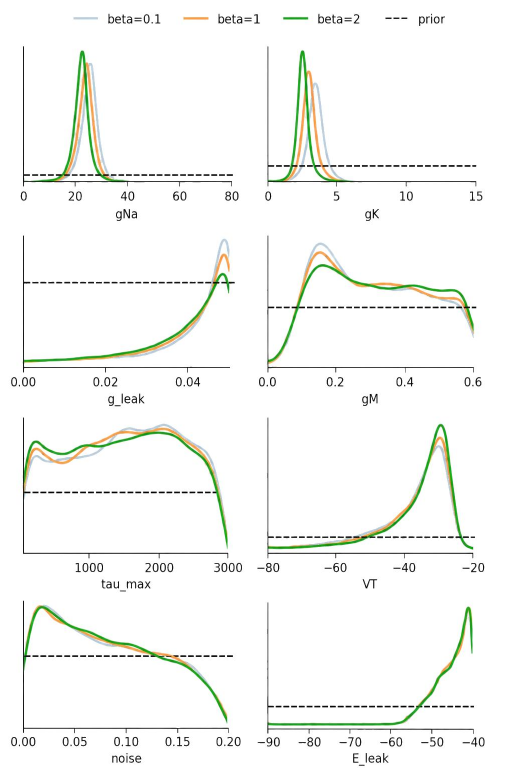}
  \caption{HH parameter marginals under \texttt{RouteB\_NLE} for $\beta\in\{0.1,1.0,2.0\}$ (10K simulations).}
  \label{fig:hh_marginals}
  \vspace{-1.2em}
\end{figure}

\begin{figure}[t]
  \centering
  \includegraphics[width=0.4\textwidth]{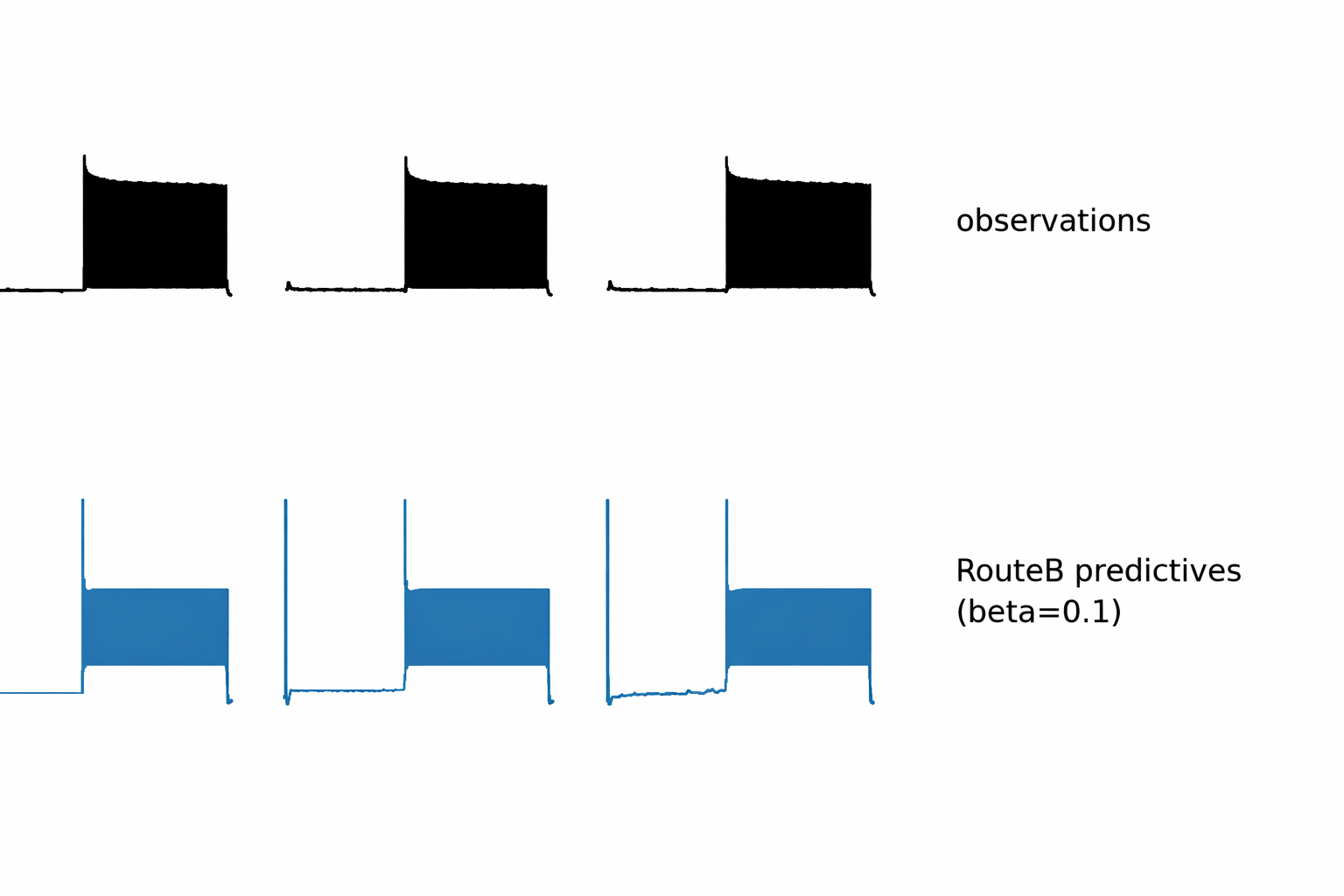}
  \caption{Three observations from the Allen Cell Types Database and \texttt{RouteB\_NLE} predictive samples.}
  \label{fig:hh_ppc}
\end{figure}

We train our $\beta$-conditional posterior estimator using \texttt{RouteB\_NLE} with $10{,}000$ prior-sampled simulations. Figure~\ref{fig:hh_marginals} shows the one-dimensional marginal posteriors for the eight HH parameters under $\beta\in\{0.1,1.0,2.0\}$.
The marginals are broadly consistent across $\beta$, with only modest shifts in a subset of parameters (visible changes in the tails/peaks for $g_{\mathrm{Na}}$ and $g_{\mathrm{K}}$) while others remain nearly unchanged ($E_{\mathrm{leak}}$). Figure~\ref{fig:hh_ppc} provides a posterior predictive check on three selected experimental sweeps: simulations generated from samples of $q_\phi(\theta \mid x_{\mathrm{obs}},\beta)$ at $\beta=0.1$ qualitatively reproduce the observed spiking patterns, with predicted traces capturing the main spike timing structure across the three observations. Overall, these results indicate that \texttt{RouteB\_NLE} trained with $10$K simulations yields stable posterior marginals across $\beta$ and produces posterior predictive simulations that match key qualitative features of the experimental voltage traces.

\section{Route-specific trade-offs and limitations}
\label{sec:limitations}

Like other amortized inference methods, our approach trades substantial offline simulation and training cost for fast reuse at test time. 
When simulations are prohibitively expensive or the data-generating process shifts after training, per-instance adaptive inference may be more cost-effective and robust~\citep{gutmann2016bayesian}. 
Thus, our method is best viewed as complementary to traditional adaptive inference rather than a replacement.

\paragraph{Route A.}
\noindent Route~A depends critically on the accuracy and stability of the learned joint score. 
Estimation errors can propagate along the annealing path and become amplified at large inverse temperatures, where the target posterior is more concentrated. 
In addition, posterior sampling relies on short-run Langevin or related MCMC updates, whose performance is sensitive to discretization steps, noise schedules, and the number of iterations. 
These choices require careful tuning and may be unreliable for sharp, multimodal, or weakly connected posterior geometries, especially as $\beta$ increases.

At the same time, Route~A offers a useful advantage: by explicitly simulating from tempered joint distributions, it can generate samples that move off the nominal simulator manifold. 
Route A performs Langevin updates directly in the ambient joint space by targeting the tempered joint $p_\beta(\theta,x)$, so the generated pairs are not constrained to remain exact forward-simulation pairs, which is what we mean by “off-manifold”. This ability is only useful to the extent that the learned score extrapolates locally beyond the nominal simulator-supported region; it is not an unrestricted guarantee far off-manifold. The potential benefit of this flexibility is precisely in misspecified settings, where the nominal simulator may under-cover regions that become relevant under the tempered target. 


\paragraph{Route B.}
\noindent Route~B avoids explicit MCMC and can therefore be faster at deployment time.
However, its accuracy hinges on the quality and variance of the importance weights $w_\beta \propto \hat p_\eta(x\mid\theta)^{\,\beta-1}$ (NLE-based) and 
$w_\beta \propto \hat r^{\,\beta-1}$ (NRE-based).
Classifier miscalibration or noisy likelihood-ratio estimates can lead to high-variance weights and severe effective sample size (ESS) collapse, especially when $|\beta-1|$ is large or $\beta \to 0$.
In such regimes, posterior estimates become unstable despite nominal amortization. Furthermore, reweighting cannot recover posterior regions that are entirely absent from the base joint distribution.
If the nominal simulator undercovers regions emphasized by the tempered posterior, Route~B can remain biased even with a large sample size. In contrast to Route~A, it therefore offers limited robustness to severe model misspecification.
As a variance-control extension, Appendix~\ref{app:defensive-snis} gives a defensive SNIS proposal that mixes the simulator joint with the product of marginals and proves bounded defensive weights for $\beta\in[0,1]$.

\paragraph{Generalization.}
\noindent Both routes require a single conditional model $q_\phi(\theta \mid x, \beta)$ to generalize jointly over observations and inverse temperatures.
Performance can degrade for $\beta$ values outside the trained range $\mathcal{B}$ or for rare and distribution-shifted observations.
Furthermore, NPE inherits sensitivity to neural architecture and training hyperparameters, and suboptimal choices can lead to underfitting, mode dropping, or miscalibration, necessitating careful tuning and diagnostic checks~\citep{lueckmann2021benchmarking,hermans2020likelihood}.
In addition, for small $\beta$ the posterior is prior-dominated and inherits sensitivity to prior misspecification, consistent with observations in generalized Bayesian updating~\citep{bissiri2016general,lyddon2019general}.

\section{Conclusion and future work}
\label{sec:conclusion}

We gave $x,\beta$-amortized posterior inference for generalized Bayesian inference via power posteriors, learning a single conditional model
$q_\phi(\theta\mid x,\beta)$ that supports single-pass sampling at deployment for user-chosen temperatures.
We instantiated two complementary training routes: (A) score-assisted synthesis of tempered pairs with offline distillation, and
(B) sampler-free SNIS reweighting that reuses a base joint dataset across $\beta$.
Across standard SBI benchmarks, the resulting amortized family closely tracks non-amortized tempered targets over a broad temperature range,
while eliminating simulator calls and inference-time MCMC at test time.
The remaining challenges are (i) coverage/weight stability as $|\beta-1|$ grows and (ii) offline cost when simulators are expensive.

\paragraph{Future work.}
A promising direction is to make the approach adaptive rather than fixed: use ESS/weight-dispersion to automatically select between Route~B (when weights are stable) and Route~A (when coverage is insufficient).
Second, more robust reweighting can build on the defensive SNIS construction in Appendix~\ref{app:defensive-snis}, together with calibrated ratio/likelihood estimators and sequential importance sampling (using $q_\phi(\theta\mid x,\beta)$ as a prior), to control heavy-tailed weights
beyond the basic Route~B regime.
Finally, extend evaluation to real-world amortization settings (many datasets, shifting conditions) and study principled temperature selection
as a robustness knob under misspecification. 

In this paper we have presented theory and methods for the power posterior only. This is just one variant of GBI. However, our methods extend straightforwardly, at least to the  distance-based losses considered in \cite{gao2023generalized} (such as the energy score). Instead of estimating the amortised score function in Phase I of Route A we estimate the amortised loss, exactly as in \cite{gao2023generalized}, and then in Phase II we target the joint distribution on the right side of \eqref{eq:GB-loss-post} using standard MCMC methods, but using the amortised loss estimate from Phase I in place of the exact loss, to get the necessary training samples. We then proceed as in Phase III of Route A.

\section*{Impact Statement}

This work enables amortized inference for generalized posteriors across tempering values
$\beta$, reducing the cost of robustness and sensitivity analyses in simulation-based inference. By supporting fast posterior and predictive queries after training, it can improve accessibility and lower repeated sampling and compute overhead. Risks include misuse of amortized approximations without calibration, or misinterpreting tempered posteriors as fully Bayesian posteriors under misspecification, which may yield misleading
uncertainty. We recommend reporting calibration and posterior-predictive diagnostics, monitoring importance-weight stability (e.g., ESS), and validating against higher-fidelity samplers on a small subset when feasible.


\bibliography{reference}
\bibliographystyle{icml2026}

\newpage
\appendix
\onecolumn
\section*{Appendix}
\section{Neural architectures}
All methods that use score modeling follow Noise Conditional Score Networks (NCSN) ~\citep{song2019generative} on noisy pairs and Langevin-type sampling for tempered joints~\citep{vincent2011connection,hyvarinen2005estimation,song2019generative}.

\subsection{Score-assisted route (Route A)}
\label{details:routeA}

The implementation details of the three phases of Algorithm 2 follow. 

\begin{description}
\item[Phase I : Score network $s_\psi(\theta,x,\sigma)$] We first fix the noise scales $\sigma_1>\cdots>\sigma_T$ used for denoising score matching and for the annealed sampler in Phase~II, together with the number of Langevin steps $\{K_j\}_{j=1}^T$ and the base step-size scale $\epsilon_\eta$. The score network concatenates $[e_\theta,e_x,e_\sigma]$ and maps the result through a residual MLP to produce a joint score.
\begin{itemize}
\item Parameter embedding $e_\theta$ - Input is $\theta$. MLP (width 128) with 3 residual blocks and LayerNorm after each block. 
\item Data embedding $e_x$ -Input is $x$. MLP (width 128) with 3 residual blocks and LayerNorm after each block. 
\item Noise-scale embedding $e_\sigma$ - Input is $\log\sigma$. 2-layer MLP (width 128) similar to time/noise embeddings used in diffusion models~\citep{ho2020denoising,nichol2021improved}.
\end{itemize}

\item[Phase II : Synthesize tempered pairs] We initialize each chain from the base joint $(\theta,x)\sim \pi(\theta)p(x\mid\theta)$ and run annealed short-run Langevin dynamics using the same noise scales $\{\sigma_j\}_{j=1}^T$ fixed in Phase~I. The sampler loops over the noise scales from large to small. At scale $\sigma_j$, we set
\[
\eta_j=\epsilon_\eta\,\frac{\sigma_j^2}{\sigma_T^2},
\]
and keep this step size fixed for all $K_j$ Langevin steps at that scale. Each step uses the tempered score
$\beta\,s_\psi(\theta,x,\sigma_j)-(\beta-1)(s_\pi(\theta),0)$ before moving to the next noise scale. The base step-size scale $\epsilon_\eta$ controls the overall discretization: overly large values can introduce discretization bias or unstable moves, whereas overly small values can leave the short-run chain poorly mixed. We tune $\epsilon_\eta$ on a held-out diagnostic sweep and then use the resulting $\{\eta_j\}$ schedule across the corresponding Route~A synthesis runs.
\item [Phase III : Posterior estimator] For each temperature, we train an SNPE posterior $q_\phi(\theta\mid x,\beta)$ using \texttt{sbi} with \texttt{density\_estimator='nsf' or 'mdn'}~\citep{greenberg2019automatic,370fbeadb5584ba9ab2938431fc4f140}. Settings: batch 128, \texttt{stop\_after\_epochs=50}, validation fraction $0.1$.
\end{description}

\paragraph{Step-size ablation.}
Because Route~A relies on discretized short-run Langevin dynamics, the synthesis
quality depends on the Langevin step-size schedule. We therefore ablate the
base step-size scale $\epsilon_\eta$ on the Gaussian mixture benchmark at $\beta=0.9$, using C2ST against the
reference power posterior as the evaluation metric. Figure~\ref{fig:routea-stepsize-gm}
shows a non-monotone dependence: large step sizes lead to poor agreement due to
discretization error, while very small step sizes also degrade performance
because the chains move too slowly within the fixed short-run budget. In this
sweep, the lowest C2ST occurs at an intermediate step-size scale.

\begin{figure}[!t]
  \centering
  \includegraphics[width=0.78\linewidth]{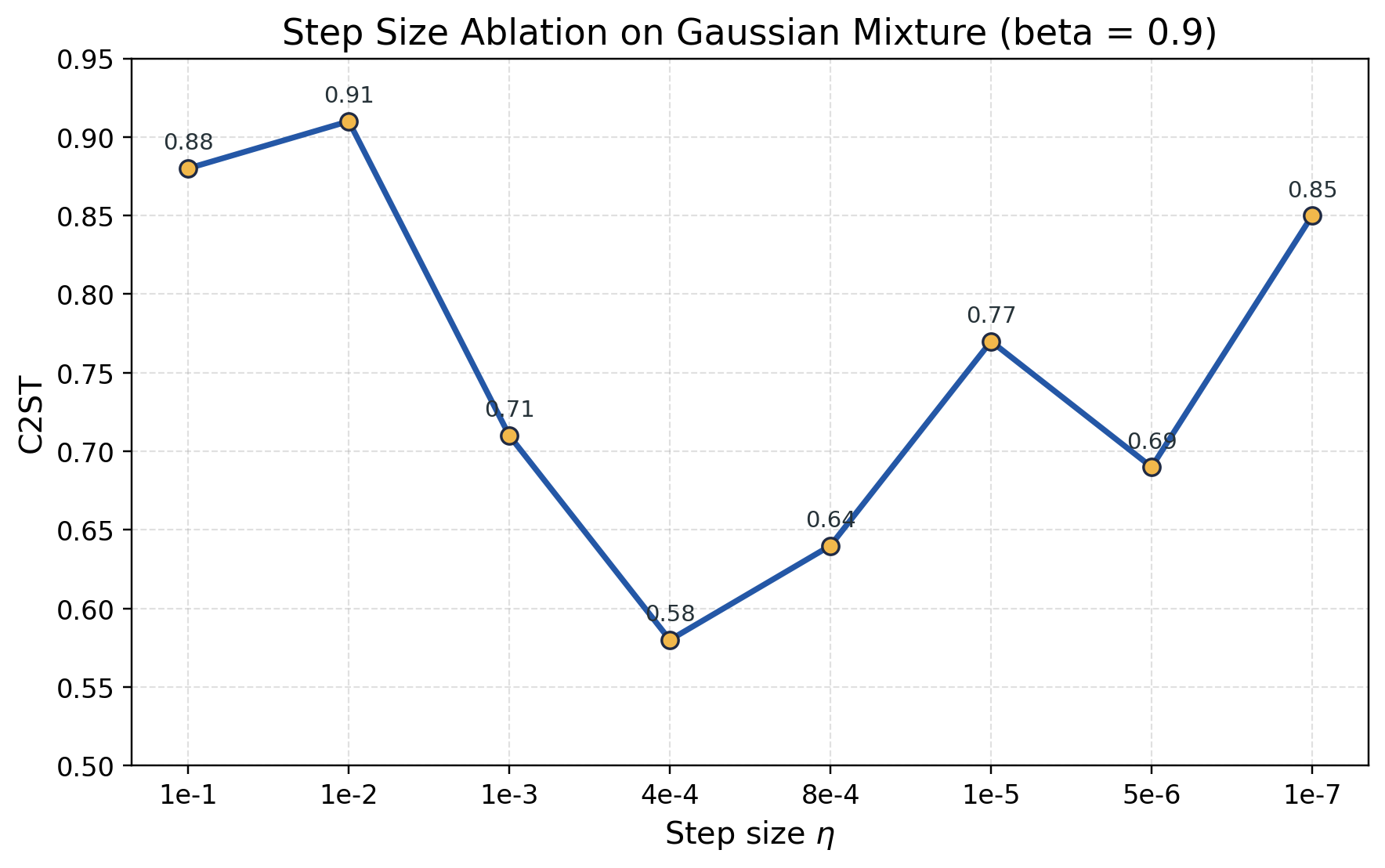}
  \caption{\textbf{Route~A step-size ablation on Gaussian mixture.}
  C2ST between synthesized Route~A samples and the reference power posterior at
  $\beta=0.9$ for different Langevin step-size scales. Lower C2ST indicates
  closer agreement. The curve shows an intermediate optimum, with both overly
  aggressive and overly conservative Langevin updates producing worse samples.}
  \label{fig:routea-stepsize-gm}
\end{figure}
\noindent \textbf{Architectural notes.} All MLPs use residual connections~\citep{he2016deep} and LayerNorm~\citep{ba2016layer}. The design mirrors diffusion/score-modeling practice for conditioning on noise variables~\citep{ho2020denoising,nichol2021improved}.

\medskip

\subsection{SNIS-weighted route (Route B)}
\label{details:routeB}
(i) \emph{NLE}: learn a neural likelihood $\hat p_\eta(x\mid\theta)$~\citep{papamakarios2019sequential}; choose $m(x)=1$ so the SNIS weight is
$w_\beta(\theta,x)\propto \hat p_\eta(x\mid\theta)^{\beta-1}$.

\noindent (ii) \emph{NRE}: learn a classifier $d_\psi(\theta,x)$ that discriminates joint vs.\ product-of-marginals~\citep{hermans2020likelihood,durkan2020contrastive,thomas2022likelihood}; set $\hat r_\psi=\tfrac{d_\psi}{1-d_\psi}$ and use
$m(x)=p(x)^{1-\beta}$, giving
$w_\beta(\theta,x)\propto \hat r_\psi(x\mid\theta)^{\,\beta-1}$.

\noindent \textbf{Inference.}
For an observation $x_{\mathrm{obs}}$ and temperature $\beta$, form the context $[z(x_{\mathrm{obs}}),\beta]$ (with $z$ an identity or learned summary) and draw samples from $q_\phi(\theta\mid x_{\mathrm{obs}},\beta)$ in a single forward pass—no simulator or MCMC.

\noindent \textbf{Notes.}
MDNs work well for low-dimensional multi-modal posteriors; in higher dimensions use flow-based $q_\phi$ (e.g., MAF/NSF) or other conditional estimators~\citep{papamakarios2021normalizing}.

\section{Proof of proposition}
\label{pf:snis}
\paragraph{Setup.}
Let $\Theta\subseteq\mathbb{R}^{d_\theta}$ and $\mathcal{X}\subseteq\mathbb{R}^{d_x}$ be measurable spaces.
Define the base joint distribution with density
$p(\theta,x)=\pi(\theta)p(x\mid\theta)$.
For $\beta>0$,the tempered joint (unnormalized) is defined as $\tilde p_\beta(\theta,x)=\pi(\theta)p(x\mid\theta)^\beta m(x)$, where $m(x) > 0$ for any $x \in \mathcal{X}$. And its x-marginal $ \tilde p_\beta(x)=\int \pi (\theta)p(x\mid \theta)^\beta\,d\theta\,m(x)$

\noindent Therefore, the conditional distribution can be written as
\begin{equation}
    p_\beta(\theta\mid x) = \frac{\tilde p_\beta(\theta,x)}{\tilde p_\beta(x)} = \frac{\pi(\theta)p(x\mid\theta)^\beta}{\int \pi (\theta)p(x\mid \theta)^\beta\,d\theta} \propto\pi(\theta)p(x\mid\theta)^\beta.
\end{equation}
\noindent which is usually the definition of the power posterior
\begin{equation}
p_\beta(\theta\mid x)=\frac{\pi(\theta)p(x\mid\theta)^\beta}{\tilde p_\beta(x)}.
\end{equation}

\noindent Define the weights as:
\begin{equation}
w_\beta(\theta,x) = p(x\mid\theta)^{\beta-1}m(x),m(x) >0
\label{weights}
\end{equation}

\noindent We train $q_\phi(\theta\mid x,\beta)$ by weighted conditional MLE on samples from $p(x, \theta)$:
\begin{equation}
\mathcal{L}(\phi)
\;=\;
\mathbb{E}_{(\theta,x)\sim p(\theta,x)}\big[\,\tilde w_\beta(\theta,x)\cdot(-\log q_\phi(\theta\mid x,\beta))\,\big],
\qquad
\tilde w_\beta(\theta,x)\propto w_\beta(\theta,x).
\label{eq:routeB_objective}
\end{equation}

\begin{lemma}
For any measurable $g:\Theta\times\mathcal X\to\mathbb R$,
\begin{align}
\int g(\theta,x)\,p(\theta,x)\,p(x\mid\theta)^{\beta-1}\,m(x)\,d\theta\,dx
&=
\int g(\theta,x)\,\tilde p_\beta(\theta,x)\,d\theta\,dx
\label{eq:first}\\
&=
\int \!\Bigg(\int g(\theta,x)\,p_\beta(\theta\mid x)\,d\theta\Bigg)\,
\tilde p_\beta(x)dx .
\label{eq:second}
\end{align}
\end{lemma}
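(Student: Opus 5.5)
The plan is to verify the two equalities of the lemma separately by pointwise algebra on the integrands, followed by a Fubini/Tonelli step to split the joint integral into an iterated one.

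\textbf{First equality (\ref{eq:first}).} Substitute $p(\theta,x)=\pi(\theta)p(x\mid\theta)$ into the left-hand side. Wherever $p(x\mid\theta)>0$ we have
\[
p(\theta,x)\,p(x\mid\theta)^{\beta-1}m(x)=\pi(\theta)\,p(x\mid\theta)^{\beta}\,m(x)=\tilde p_\beta(\theta,x).
\]
On the set where $p(x\mid\theta)=0$, both sides vanish for $\beta>0$: the base joint is zero there, and so is $p(x\mid\theta)^\beta$. To avoid the indeterminate $0\cdot 0^{\beta-1}$ when $\beta<1$, I will adopt the convention $w_\beta=0$ off the support. Both integrands then agree pointwise, so the integrals agree whenever either is well defined. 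I will assume $g$ is $\tilde p_\beta$-integrable, or $g\ge 0$, so that the integrals make sense.

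\textbf{Second equality (\ref{eq:second}).} By the Setup, $\tilde p_\beta(\theta,x)=p_\beta(\theta\mid x)\,\tilde p_\beta(x)$ for every $x$ with $\tilde p_\beta(x)\in(0,\infty)$. This holds for all $x$ because $m(x)>0$ and $Z_\beta(x)<\infty$ by the temperature restriction in Section~\ref{sec:setup}, and $Z_\beta(x)>0$ whenever the integrand is nonzero somewhere. Substituting this factorization into (\ref{eq:first}) and applying Fubini's theorem splits the double integral into an inner $\theta$-integral against $p_\beta(\theta\mid x)$ and an outer $x$-integral against $\tilde p_\beta(x)$, which is exactly (\ref{eq:second}).

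\textbf{Main obstacle.} The only real delicacy is justifying Fubini. For $g\ge 0$, Tonelli's theorem applies with no further conditions. For general $g$, I would split $g=g^+-g^-$ and require $\int |g|\,\tilde p_\beta<\infty$; in the later application $g=-\log q_\phi$, this amounts to assuming the tempered cross-entropy is finite. I would state this integrability assumption explicitly rather than try to derive it. Note also that $\tilde p_\beta$ need not be normalized for this lemma, because the identity is linear in the measure and the normalizer plays no role.
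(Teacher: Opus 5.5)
Your proposal is correct and follows the paper's proof: substitute $p(\theta,x)=\pi(\theta)p(x\mid\theta)$ to obtain $\tilde p_\beta(\theta,x)$, then factor $\tilde p_\beta(\theta,x)=p_\beta(\theta\mid x)\,\tilde p_\beta(x)$ and exchange the order of integration by Fubini/Tonelli. Your additions fill in details that the paper's ``any measurable $g$'' glosses over and are worth keeping: the convention off the support of $p(x\mid\theta)$ when $\beta<1$, and the explicit requirement $g\ge 0$ or $\int|g|\,\tilde p_\beta<\infty$.
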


\begin{proof}
\begin{align}
\int g(\theta,x)\,p(\theta,x)\,p(x\mid\theta)^{\beta-1}\,m(x)\,d\theta\,dx
&=
\int g(\theta,x)\, \pi(\theta)\,p(x\mid\theta)^{\beta}\,m(x)\,d\theta\,dx \\
&=
\int g(\theta,x)\,\tilde p_\beta(\theta,x)\,d\theta\,dx 
\end{align}
where we used the decomposition of the base joint distribution and the definition of the tempered joint distribution, which proves \eqref{eq:first}. For \eqref{eq:second}, write
$\tilde p_\beta(\theta,x)=p_\beta(\theta\mid x)\,\tilde p_\beta(x)$
and apply Fubini/Tonelli to exchange the order of integration:

\begin{align*}
\int g(\theta,x)\,\tilde p_\beta(\theta,x)\,d\theta\,dx 
&= \int\!\bigl(\int g(\theta,x)\,p_\beta(\theta\mid x)\,d\theta\bigr)\,
   \tilde p_\beta(x)\,dx \\
\end{align*}
\end{proof}

\noindent \textbf{Proposition 4.1:}
\label{prop:1}
Assume $\rho_\beta(x)>0$ for $p_\beta$-almost all $x$ and that the model class for $q_\phi(\cdot\mid x,\beta)$
contains $p_\beta(\cdot\mid x)$. Then the population minimizers of \eqref{eq:routeB_objective} satisfy, for
$p_\beta$-almost all $x$,
\begin{equation}
\arg\min_\phi\,\mathcal{L}(\phi)
=
\arg\min_\phi\,
\int \rho_\beta(x)\ \KL\!\big(p_\beta(\cdot\mid x)\,\|\,q_\phi(\cdot\mid x,\beta)\big)\,dx.
\end{equation}

\begin{proof}
By Lemma 1, up to a positive constant the objective equals
\begin{align}
\mathcal{L}(\phi)
&\propto
\int \!\left[\int \big(-\log q_\phi(\theta\mid x,\beta)\big)\,p_\beta(\theta\mid x)\,d\theta\right]
\,\tilde p_\beta(x)\,dx \\
&=
\int \Big(H\big(p_\beta(\cdot\mid x)\big)+\mathrm{KL}\!\big(p_\beta(\cdot\mid x)\,\|\,q_\phi(\cdot\mid x,\beta)\big)\Big)
\,\tilde p_\beta(x)\,dx \\
&=
\int H\big(p_\beta(\cdot\mid x)\big)\,\tilde p_\beta(x)\,dx
\;+\;
\int \mathrm{KL}\!\big(p_\beta(\cdot\mid x)\,\|\,q_\phi(\cdot\mid x,\beta)\big)\,\tilde p_\beta(x)\,dx .
\end{align}

\noindent Here $H(\cdot)$ is the conditional entropy, independent of $\phi$. Since $\rho_\beta(x)>0$, the integral is minimized iff
$\KL\!\big(p_\beta(\cdot\mid x)\,\|\,q_\phi(\cdot\mid x,\beta)\big)=0$ for $p_\beta$-a.e.\ $x$.
This happens exactly when $q_\phi(\cdot\mid x,\beta)=p_\beta(\cdot\mid x)$ almost everywhere.

\noindent Then we have 
\begin{equation}
\arg\min_\phi\,\mathcal{L}(\phi)
=
\arg\min_\phi\,
\int \rho_\beta(x)\ \KL\!\big(p_\beta(\cdot\mid x)\,\|\,q_\phi(\cdot\mid x,\beta)\big)\,dx.
\end{equation}
\end{proof}

\noindent \textbf{Proposition 4.2.}
\label{finite variance}
For $\beta\in[\tfrac12,1]$, with the NRE self-normalized IS weight
\[
w_\beta(\theta,x)=p(x\mid\theta)^{\beta-1}\,p(x)^{\,1-\beta}.
\]
Then
\[
\mathbb E_{(\theta,x) \sim \pi(\theta)p(x\mid\theta)}[w_\beta^2]
=\int_\Theta\!\!\int_{\mathcal X}\pi(\theta)\,p(x\mid\theta)^{\,2\beta-1}\,p(x)^{\,2-2\beta}\,dx\,d\theta
\ \le\ 1.
\]
In particular, the SNIS estimator based on joint distribution $p(\theta,x)$ and $w_\beta$ has finite variance.

\begin{proof}
Fix $\beta\in[\tfrac12,1]$.For $\beta=\tfrac12$ we have
\[
\int_{\mathcal X} p(x\mid \theta)^{\,0}\,p(x)^{\,1}\,dx=\int_{\mathcal X}p(x)\,dx=1,
\]
hence $\mathbb E_{p(\theta,x)}[w_{\tfrac12}^2]=1$.

\noindent For $\tfrac12<\beta<1$, set
\[
u=\frac{1}{2\beta-1},\qquad v=\frac{1}{2-2\beta},\qquad \frac1u+\frac1v=1,\quad u,v>1.
\]
By Hölder's inequality,
\[
\int_{\mathcal X} p(x\mid \theta)^{\,2\beta-1} p(x)^{\,2-2\beta}\,dx
=\int_{\mathcal X} \big(p(x\mid \theta)\big)^{1/u}\big(p(x)\big)^{1/v}dx
\le \bigg(\!\int p(x\mid \theta) dx\bigg)^{1/u}\!\bigg(\!\int p(x)\,dx\bigg)^{1/v}=1.
\]
Integrating the above bound against $\pi(\theta)\,d\theta$ yields $\mathbb E_{p(\theta,x)}[w_\beta^2]\le 1$.
Finite variance follows since $Var[w_\beta] = \mathbb E_{p(\theta,x)}[w_\beta^2] - E_{p(\theta,x)}[w_\beta]^2<E_{p(\theta,x)}[w_\beta^2]<\infty$. 
\end{proof}

\subsection{Defensive SNIS for Route B}
\label{app:defensive-snis}

The finite-variance guarantee in Proposition~4.2 applies to the original
NRE-based Route~B weights when $\beta\in[1/2,1]$. A natural extension is to use
a defensive proposal on the joint space, mixing the simulator joint with the
product of marginals:
\[
p_\lambda(\theta,x)
=
(1-\lambda)\,\pi(\theta)p(x\mid\theta)
+\lambda\,\pi(\theta)p(x),
\qquad \lambda\in(0,1).
\]
Samples from $p_\lambda$ are obtained by drawing a Bernoulli variable: with
probability $1-\lambda$ sample $(\theta,x)\sim\pi(\theta)p(x\mid\theta)$, and
with probability $\lambda$ sample $\theta\sim\pi(\theta)$ and $x\sim p(x)$
independently. The latter is available in SBI by simulating
$\theta'\sim\pi(\theta')$, $x\sim p(x\mid\theta')$, and discarding $\theta'$.

For the NRE-based target
\[
\tilde p_\beta(\theta,x)
=
\pi(\theta)p(x\mid\theta)^\beta p(x)^{1-\beta},
\]
write the likelihood ratio as
\[
r(\theta,x)=\frac{p(x\mid\theta)}{p(x)}.
\]
Then
\[
\tilde p_\beta(\theta,x)=\pi(\theta)p(x)r(\theta,x)^\beta,
\qquad
p_\lambda(\theta,x)=\pi(\theta)p(x)\{(1-\lambda)r(\theta,x)+\lambda\}.
\]
The defensive importance weight is therefore
\begin{equation}
w_{\beta,\lambda}(\theta,x)
\propto
\frac{\tilde p_\beta(\theta,x)}{p_\lambda(\theta,x)}
=
\frac{r(\theta,x)^\beta}{(1-\lambda)r(\theta,x)+\lambda}.
\label{eq:defensive-weight}
\end{equation}
With a trained NRE ratio $\hat r_\psi(\theta,x)$, we compute these weights in
the log domain:
\[
\log \hat w_{\beta,\lambda,i}
=
\beta\log \hat r_\psi(\theta_i,x_i)
-\log\!\big((1-\lambda)\hat r_\psi(\theta_i,x_i)+\lambda\big),
\]
and use global per-temperature normalizers
\[
S_{\beta,\lambda}
=
\log\sum_{i=1}^N \exp(\log \hat w_{\beta,\lambda,i}),
\qquad
\tilde w^{(\lambda)}_{\beta,i}
=
\exp(\log \hat w_{\beta,\lambda,i}-S_{\beta,\lambda}).
\]
As $\lambda\downarrow0$, the proposal reduces to the simulator joint and
\eqref{eq:defensive-weight} recovers the original NRE weight
$w_\beta(\theta,x)\propto r(\theta,x)^{\beta-1}$.

\begin{proposition}
\label{prop:defensive-snis-bounded}
Let $r(\theta,x)=p(x\mid\theta)/p(x)$, fix $\lambda\in(0,1)$ and
$\beta\in[0,1]$, and define
\[
W_{\beta,\lambda}(r)
=
\frac{r^\beta}{(1-\lambda)r+\lambda},
\qquad r>0.
\]
Under the defensive proposal $p_\lambda$, the unnormalized defensive SNIS
weight $w_{\beta,\lambda}(\theta,x)=W_{\beta,\lambda}(r(\theta,x))$ is bounded.
Consequently,
\[
\operatorname{Var}_{p_\lambda}\!\big[w_{\beta,\lambda}(\theta,x)\big]<\infty.
\]
\end{proposition}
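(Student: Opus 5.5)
The plan is to bound the scalar function $W_{\beta,\lambda}(r)=r^\beta/((1-\lambda)r+\lambda)$ uniformly over $r>0$. Boundedness of $w_{\beta,\lambda}(\theta,x)$ then follows immediately, since $w_{\beta,\lambda}(\theta,x)=W_{\beta,\lambda}(r(\theta,x))$. A bounded random variable has finite second moment under any probability measure, in particular under $p_\lambda$, and so the variance is finite. All the work is therefore a one-variable estimate on $(0,\infty)$.

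I would split the domain at $r=1$.

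\textbf{Case $r\le 1$.} Since $\beta\ge 0$ we have $r^\beta\le 1$. The denominator satisfies $(1-\lambda)r+\lambda\ge\lambda>0$. Hence $W_{\beta,\lambda}(r)\le 1/\lambda$.

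\textbf{Case $r\ge1$.} Since $\beta\le 1$ we have $r^\beta\le r$. The denominator satisfies $(1-\lambda)r+\lambda\ge(1-\lambda)r$. Hence $W_{\beta,\lambda}(r)\le 1/(1-\lambda)$.

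Combining the two cases gives
\[
0< W_{\beta,\lambda}(r)\le \max\Big\{\tfrac1\lambda,\tfrac1{1-\lambda}\Big\}=:C_\lambda<\infty \quad\text{for all } r>0,
\]
uniformly in $\beta\in[0,1]$. This is exactly where both endpoints matter. The defensive component $\lambda>0$ controls the regime $r\to0$, which is where the original weight $r^{\beta-1}$ of Proposition~4.2 blows up for $\beta<1$. The restriction $\beta\le1$ controls $r\to\infty$.

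It then follows that $\mathbb E_{p_\lambda}[w_{\beta,\lambda}^2]\le C_\lambda^2$, and therefore $\operatorname{Var}_{p_\lambda}[w_{\beta,\lambda}]\le C_\lambda^2<\infty$.

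I would also note two side points. First, the normalizing constant hidden in the ``$\propto$'' of \eqref{eq:defensive-weight} does not matter, because $\tilde p_\beta$ as written in terms of $\pi(\theta)p(x)r^\beta$ is used directly. Second, points where $r=0$, i.e.\ $p(x\mid\theta)=0$, give $W=0$ when $\beta>0$ and $W\le 1/\lambda$ when $\beta=0$, so they are harmless.

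The only mild obstacle is bookkeeping at the edge cases: $\beta=0$ (where $r^0=1$) and $r$ on the null set where $p(x)=0$ so that $r$ is undefined. I would handle the latter by noting it has $p_\lambda$-measure zero, since $p_\lambda$ has density proportional to $p(x)$ in $x$.
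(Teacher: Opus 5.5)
Your argument is correct, and it takes a genuinely different route from the paper. The paper treats $\beta=0$ and $\beta=1$ separately, obtaining exactly your two case bounds $1/\lambda$ and $1/(1-\lambda)$. For $\beta\in(0,1)$ it differentiates $F=\log W_{\beta,\lambda}$, finds the unique maximizer $r^\star=\beta\lambda/((1-\beta)(1-\lambda))$, and evaluates there to get the pointwise-sharp constant $C_{\beta,\lambda}=\beta^\beta(1-\beta)^{1-\beta}/(\lambda^{1-\beta}(1-\lambda)^\beta)$.

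Your split at $r=1$ needs no calculus and covers the endpoints $\beta\in\{0,1\}$ in the same argument. It gives the $\beta$-free constant $\max\{1/\lambda,1/(1-\lambda)\}$. This is in fact the best bound that holds uniformly over $\beta\in[0,1]$: $\log C_{\beta,\lambda}$ is convex in $\beta$, with endpoint values $-\log\lambda$ and $-\log(1-\lambda)$. A uniform bound is the natural one here, since one base sample is reweighted across a whole $\beta$-grid.

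What the paper's computation buys is the exact supremum at each fixed $\beta$. For interior $\beta$ this can be far smaller than your constant: $C_{\beta,\lambda}=1$ exactly whenever $\beta=1-\lambda$ (then $r^\star=1$), e.g. $\lambda=0.01$, $\beta=0.99$ gives $1$ against your $100$. So the paper gets a sharper per-temperature variance bound. If you want that sharp constant without derivatives, apply weighted AM--GM to $(1-\lambda)r+\lambda=\beta\cdot\frac{(1-\lambda)r}{\beta}+(1-\beta)\cdot\frac{\lambda}{1-\beta}$; this yields it in one line, with equality at $r^\star$.

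Your remark that $\{p(x)=0\}$ is $p_\lambda$-null, because the $x$-marginal of $p_\lambda$ is $p(x)$, is correct. The paper leaves this detail implicit.
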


\begin{proof}
For $\beta=0$,
\[
W_{0,\lambda}(r)=\frac{1}{(1-\lambda)r+\lambda}\le \frac{1}{\lambda}.
\]
For $\beta=1$,
\[
W_{1,\lambda}(r)=\frac{r}{(1-\lambda)r+\lambda}\le \frac{1}{1-\lambda}.
\]
It remains to consider $\beta\in(0,1)$. Let
\[
F(r)=\log W_{\beta,\lambda}(r)
=
\beta\log r-\log((1-\lambda)r+\lambda).
\]
Then
\[
F'(r)
=
\frac{\beta}{r}
-\frac{1-\lambda}{(1-\lambda)r+\lambda}
=
\frac{(\beta-1)(1-\lambda)r+\beta\lambda}
{r((1-\lambda)r+\lambda)}.
\]
Thus $F$ increases on
$\big(0,\beta\lambda/((1-\beta)(1-\lambda))\big)$ and decreases afterwards.
The maximum occurs at
\[
r^\star=\frac{\beta\lambda}{(1-\beta)(1-\lambda)}.
\]
Evaluating $W_{\beta,\lambda}$ at $r^\star$ gives the finite bound
\[
W_{\beta,\lambda}(r)
\le
\frac{\beta^\beta(1-\beta)^{1-\beta}}
{\lambda^{1-\beta}(1-\lambda)^\beta}
=:C_{\beta,\lambda}<\infty.
\]
Combining the three cases, $w_{\beta,\lambda}$ is bounded for all
$\beta\in[0,1]$ by some finite constant $C'_{\beta,\lambda}$. Hence
\[
\mathbb E_{p_\lambda}[w_{\beta,\lambda}^2]
\le (C'_{\beta,\lambda})^2<\infty,
\]
and the variance is finite.
\end{proof}

\begin{remark}
For $\beta>1$, $W_{\beta,\lambda}(r)\sim r^{\beta-1}/(1-\lambda)$ as
$r\to\infty$, so boundedness no longer holds. Finite variance then requires
additional moment conditions on the likelihood ratio under $p_\lambda$.
\end{remark}

\noindent \begin{proposition}\label{prop:minibatch-unbiased}
Fix a temperature $\beta$ and let $\{\ell_i(\phi)\}_{i=1}^N$ be differentiable per-sample losses with
\[
\ell_i(\phi)\;=\;-\log q_\phi(\theta_i\mid x_i,\beta),
\]
Define the globally and locally normalized SNIS weights

\[
\tilde w_{\beta,i}^{G} \;=\; \frac{w_{\beta,i}}{\sum_{j=1}^N w_{\beta,j}},\qquad
\tilde w_{\beta,i}^{L} \;=\; \frac{w_{\beta,i}}{\sum_{j \in \mathcal{I}} w_{\beta,j}}.
\]

Let $\mathcal{I}\subset\{1,\dots,N\}$ be a mini-batch of size $|\mathcal{I}|=b$.
Define the global and local estimators
\[
\widehat{\mathcal{L}}_{\mathcal{I}}^G(\phi)\;=\;\frac{N}{b}\sum_{i\in \mathcal{I}} \tilde w_{\beta,i}^{G}\, \ell_i(\phi), \quad
\widehat{\mathcal{L}}_{\mathcal{I}}^L(\phi)\;=\;\frac{N}{b}\sum_{i\in \mathcal{I}} \tilde w_{\beta,i}^{L}\, \ell_i(\phi)
\]
Then
\[
\mathbb{E}_{\mathcal{I}}\!\left[\widehat{\mathcal{L}}_{\mathcal{I}}^G(\phi)\right]
=\sum_{i=1}^N \tilde w_{\beta,i}^{G}\, \ell_i(\phi),\quad
\mathbb{E}_{\mathcal{I}}\!\left[\widehat{\mathcal{L}}_{\mathcal{I}}^L(\phi)\right]
\neq \sum_{i=1}^N \tilde w_{\beta,i}^{G}\, \ell_i(\phi) 
\]
\end{proposition}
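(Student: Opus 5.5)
The plan is to compute both expectations directly over the random mini-batch. I take $\mathcal I$ to be drawn uniformly among all size-$b$ subsets of $\{1,\dots,N\}$, without replacement; sampling with replacement works the same way. The key fact is the inclusion probability $\Pr(i\in\mathcal I)=b/N$ for every $i$, which follows from symmetry.

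\textbf{Global normalization.} The weights $\tilde w^{G}_{\beta,i}$ are computed once from the full dataset, so they are deterministic given the data and do not depend on $\mathcal I$. I write the estimator with indicators,
\[
\widehat{\mathcal L}^G_{\mathcal I}(\phi)=\frac{N}{b}\sum_{i=1}^N \mathbbm{1}\{i\in\mathcal I\}\,\tilde w^{G}_{\beta,i}\,\ell_i(\phi),
\]
and apply linearity of expectation. Each indicator has expectation $b/N$, which cancels the prefactor $N/b$ and gives $\sum_i\tilde w^G_{\beta,i}\ell_i(\phi)$ exactly. Since $\ell_i$ is differentiable and the sum is finite, differentiation commutes with the expectation. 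The same argument therefore shows that the mini-batch gradient is unbiased for the gradient of the full weighted objective, which is the claim made in the footnote to Algorithm~\ref{alg:routeb-snis}.

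\textbf{Local normalization.} Here the denominator $\sum_{j\in\mathcal I}w_{\beta,j}$ is random and correlated with the numerator, so linearity of expectation cannot be pushed through. A general ``$\neq$'' cannot hold for every configuration: if all $w_{\beta,i}$ are equal, both estimators coincide. The statement must therefore be read as saying that unbiasedness fails in general, and I will prove it with an explicit counterexample.

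The first thing to check is scale. Since $\sum_{i\in\mathcal I}\tilde w^L_{\beta,i}=1$, the local estimator is $N/b$ times a convex combination of the batch losses. When $\ell_i\equiv c$, it equals $Nc/b$, whereas the global target equals $c$. So for $b<N$ and $c\neq0$ the two already differ.

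A more convincing counterexample, one that does not rely on this scale artifact, takes $N=2$, $b=1$, weights $w_1\neq w_2$, and losses $\ell_1=1$, $\ell_2=0$. Then $\mathbb E_{\mathcal I}[\widehat{\mathcal L}^L_{\mathcal I}]=\tfrac12\cdot 2\cdot 1=1$, while $\sum_i\tilde w^G_{\beta,i}\ell_i=w_1/(w_1+w_2)\neq1$.

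\textbf{Main obstacle.} The difficult step is interpretive rather than computational: pinning down the exact sense of ``$\neq$''. I will state the result as ``not equal in general'' and support it with the counterexample. I may also add a remark that even after rescaling by $b/N$, the local estimator is only a ratio estimator. It is biased with bias of order $O(1/b)$, vanishing only when the weights are constant, which can be seen by a delta-method expansion of the ratio.
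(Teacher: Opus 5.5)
Your proof is correct and follows the paper's approach. For the global estimator you use indicators with $\Pr(i\in\mathcal I)=b/N$ and linearity, and for the local one a $b=1$ counterexample; the paper keeps $N$ general and shows every coefficient equals $1$, so the expectation is $\sum_i\ell_i$, whereas you specialize to $N=2$. One slip in your interpretive remark: because of the $N/b$ prefactor, equal weights do not make the two estimators coincide (they differ by exactly the factor $N/b$, as your own constant-loss computation shows), so the genuine exceptions to ``$\neq$'' are $b=N$ or special loss values such as $\ell_i\equiv 0$, and the hypothesis $w_1\neq w_2$ in your $N=2$ example is unnecessary.
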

\begin{proof}
For each $i$, let $\mathbf{1}\{i\in\mathcal{I}\}$ be the inclusion indicator.
If $\mathcal{I}$ is a uniform size-$b$ subset (with or without replacement),
then $\mathbb{P}(i\in\mathcal{I})=b/N$. By linearity of expectation and of the
gradient operator,
\[
\mathbb{E}_{\mathcal{I}}\!\left[\widehat{\mathcal{L}}_{\mathcal{I}}^G(\phi)\right]
=\frac{N}{b}\sum_{i=1}^N \tilde w_{\beta,i}^{G}\,
\mathbb{E}_{\mathcal{I}}\!\left[\mathbf{1}\{i\in\mathcal{I}\}\right]\, \ell_i(\phi)
=\sum_{i=1}^N \tilde w_{\beta,i}^{G}\, \ell_i(\phi).
\]

\noindent In contrast,
\[
\mathbb{E}_{\mathcal{I}}\!\left[\widehat{\mathcal{L}}_{\mathcal{I}}^{(L)}(\phi)\right]
= \sum_{i=1}^N \alpha_i\,\ell_i(\phi),
\quad\text{with}\quad
\alpha_i \;:=\; \frac{N}{b}\,
\mathbb{E}_{\mathcal{I}}\!\left[\frac{w_{\beta,i}\,\mathbf{1}\{i\in\mathcal{I}\}}{\sum_{j \in \mathcal{I}} w_{\beta,j}}\right].
\]

\noindent To be unbiased for all losses $\{\ell_i\}$, we would need
$\alpha_i=\tilde w_{\beta,i}^{G}$ for every $i$.
However, this fails in general. A simple counterexample is $b=1$:
then $\sum_{j \in \mathcal{I}} w_{\beta,j}=w_{\beta,I}$ for the unique index $I\in\{1,\dots,N\}$,
and
\[
\alpha_i
=\; N\,\mathbb{E}_{\mathcal{I}}\!\left[\frac{w_{\beta,i}\,\mathbf{1}\{i=I\}}{w_{\beta,I}}\right]
=\; N\,\frac{1}{N}\cdot 1 \;=\; 1,
\]
so
\[
\mathbb{E}_{\mathcal{I}}\!\left[\widehat{\mathcal{L}}_{\mathcal{I}}^{(L)}(\phi)\right]
=\sum_{i=1}^N \ell_i(\phi),
\]
whereas
\(
\widehat{\mathcal{L}}_{\phi}(\beta)=\sum_{i=1}^N \tilde w_{\beta,i}^{G}\,\ell_i(\phi)
\)
with $\sum_i \tilde w_{\beta,i}^{G}=1$ and
$\tilde w_{\beta,i}^{G}\neq 1$ unless all $w_{\beta,i}$ are equal.
Hence, the locally normalized estimator is biased whenever the weights are non-constant.
\end{proof}

\section{Benchmark simulators and settings}
\label{app:benchmarks}

\subsection{Two Moons}

\begin{tabularx}{\linewidth}{l X}
\textbf{Prior} & $\mathcal{U}(-1,1)$ \\[0.4em]
\textbf{Simulator} &
$  x \mid \theta =
\begin{bmatrix}
r\cos\alpha + 0.25\\
r\sin\alpha
\end{bmatrix}
+
\begin{bmatrix}
-|\theta_1+\theta_2|/\sqrt{2}\\
(-\theta_1+\theta_2)/\sqrt{2}
\end{bmatrix}  
\,, \alpha \sim \mathcal{U}\!\left(-\tfrac{\pi}{2}, \tfrac{\pi}{2}\right), \;
  r \sim \mathcal{N}(0.1, 0.01^2)$ \\[0.8em]
\textbf{Dimensionality} & $\theta \in \mathbb{R}^2,\; x \in \mathbb{R}^2$ \\[0.4em]

\textbf{References} & \citep{greenberg2019automatic,lueckmann2021benchmarking}
\end{tabularx}

\vspace{0.6em}

\subsection{Gaussian Mixture}
\begin{tabularx}{\linewidth}{l X}
\textbf{Prior} & $\mathcal{U}(-1,1)$ \\[0.4em]

\textbf{Simulator} &
$ x \mid \theta \sim \tfrac{1}{2}\,\mathcal{N}(\theta, I_2) \;+\; \tfrac{1}{2}\,\mathcal{N}(\theta, 0.01\,I_2) $ \\[0.4em]

\textbf{Dimensionality} & $\theta \in \mathbb{R}^2,\; x \in \mathbb{R}^2$ \\[0.4em]

\textbf{References} & ~\citep{lueckmann2021benchmarking,sisson2007sequential,beaumont2009adaptive,toni2009approximate,simola2021adaptive}
\end{tabularx}

\subsection{SLCP}
\begin{tabularx}{\linewidth}{l X}

\textbf{Prior} &
$\mathcal{U}(-3,3)$
\\[0.8em]

\textbf{Simulator} &
$x \mid \theta = (x_1,\ldots,x_4),\; x_i \sim \mathcal{N}(\mathbf{m}_\theta,\mathbf{S}_\theta),$
\\[0.2em]
& 
$\displaystyle
\mathbf{m}_\theta =
\begin{bmatrix}
\theta_1 \\
\theta_2
\end{bmatrix},
\quad
\mathbf{S}_\theta =
\begin{bmatrix}
s_1^2      & \rho s_1 s_2 \\
\rho s_1 s_2 & s_2^2
\end{bmatrix},
\quad
s_1 = \theta_3^2,\;
s_2 = \theta_4^2,\;
\rho = \tanh \theta_5.
$
\\[0.8em]

\textbf{Dimensionality} &
$\theta \in \mathbb{R}^5,\quad x \in \mathbb{R}^8$
\\[0.8em]

\textbf{References} &
~\citep{lueckmann2021benchmarking,papamakarios2019sequential,greenberg2019automatic,hermans2020likelihood,durkan2020contrastive}

\end{tabularx}

\subsection{Lorenz--96}
\begin{tabularx}{\linewidth}{l X}

\textbf{Prior} &
$b_0 \sim \mathcal{U}[1.4,2.2], \quad
 b_1 \sim \mathcal{U}[0.1,1.0], \quad
 \sigma_e \sim \mathcal{U}[1.5,2.5]$
\\[0.8em]

\textbf{Simulator} &
$x \mid \theta = (x_0,\ldots,x_T),\; x_t \in \mathbb{R}^K,$ obtained by integrating
\\
& \[
\frac{dx_k}{dt}
= -x_{k-1}(x_{k-2}-x_{k+1}) - x_k + 10
  - \bigl(b_0 + b_1 x_k + \sigma_e \eta_k(t)\bigr),
\quad
\eta_k(t) \sim \mathcal{N}(0,1),
\]
for $k = 1,\ldots,K$, with cyclic indices.
\\[1.5ex]

\textbf{Dimensionality} &
$\theta \in \mathbb{R}^3,\quad x \in \mathbb{R}^{(T+1)\times K}$
\\[0.8em]

\textbf{Fixed parameters} &
State dimension $K=8$;\quad time step $\Delta t = 3/40$;\quad trajectory length $T=20$.\\
& Initial condition $x_0 = \mathbf{1}_K$ (all ones).
\\[0.8em]

\textbf{References} &
~\citep{lorenz1996predictability}

\end{tabularx}

\section{Route B effective sample size diagnostics}
\label{app:routeb-ess}

For Route~B, we monitor the normalized effective sample size (nESS) of the
SNIS weights across temperatures. For each benchmark, method, and $\beta$, nESS
is computed separately for each held-out task using $K=2000$ importance samples,
and we report the mean and interquartile range over 30 independent tasks.
The diagnostic measures weight degeneracy rather than posterior accuracy, but
it is useful for identifying regimes where simple reweighting is expected to be
unstable.

\begin{figure*}[!h]
  \centering
  \includegraphics[width=0.95\textwidth]{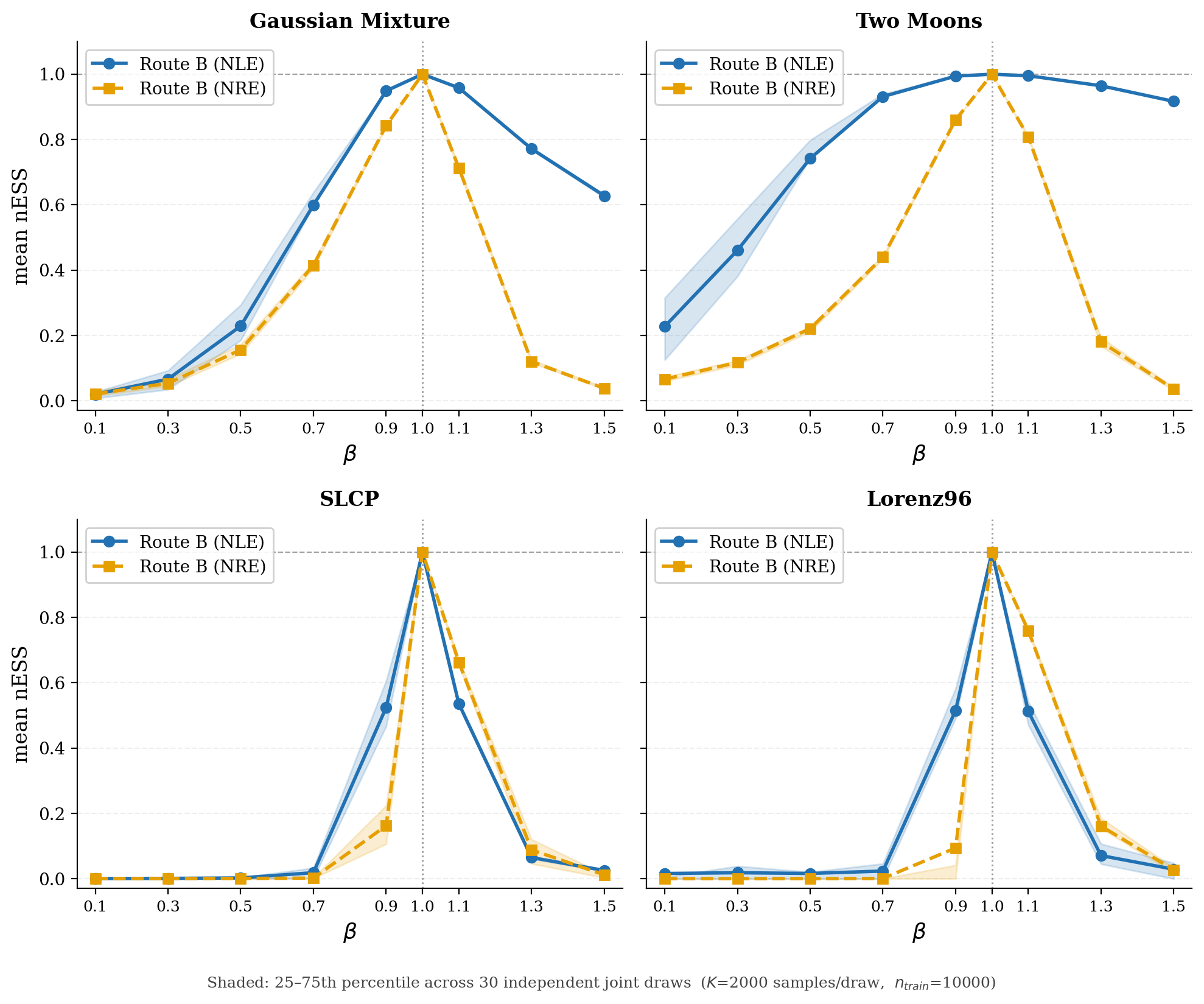}
  \caption{\textbf{Route~B effective-sample-size diagnostics.}
  Mean normalized ESS (nESS) for NLE- and NRE-based Route~B weights across
  $\beta$ on the four benchmarks, with shaded regions denoting the 25--75th
  percentiles over 30 held-out tasks. The simulation budget is
  $n_{\mathrm{train}}=10{,}000$ and each nESS estimate uses $K=2000$
  importance samples. ESS is highest near $\beta=1$, where the base joint
  proposal is closely matched to the target, and decreases as $\beta$ moves away
  from this regime, especially for SLCP and Lorenz--96.}
  \label{fig:routeb-ess}
\end{figure*}

\section{Ground-truth power posteriors.}
\label{app:true-posterior}

For each benchmark and temperature $\beta\in\{0.1,0.3,0.5,0.7,0.9,1.0,1.1,1.3,1.5\}$,
we generate high-quality samples from the power posterior
$p_\beta(\theta\mid x)\propto \pi(\theta)\,p(x\mid\theta)^{\beta}$ as follows.

\noindent \textbf{Two Moons.}
We use a tempered random-walk Metropolis--Hastings kernel inside the uniform prior box.
Proposals are reflected at the box boundaries, and we apply a small support-projection step
to ensure the simulator's valid sector (implemented via the condition $v_x>0$ in our likelihood).
During burn-in we adapt the per-chain step sizes with a Robbins--Monro schedule to target
an acceptance rate of $0.3$. We run two chains initialized on opposite sides of the fold,
burn in for $10{,}000$ iterations, thin by $2$, and then keep $5{,}000$ states per chain,
yielding $10{,}000$ power posterior draws for each $\beta$.
All computations use double precision.

\noindent \textbf{Gaussian Mixture.}
We employ an independent rejection sampler that is exact for any $\beta>0$.
The proposal is a $\beta$-scaled Gaussian-mixture envelope
$\sum_{i=1}^2 \tilde w_i(\beta)\,\mathcal N(\theta;\,x,\; \Sigma_i/\beta)$
with $\Sigma_1=I$ and $\Sigma_2=0.01\,I$, and weights
$\tilde w_i(\beta)\propto (0.5)^{\max(\beta,1)} K_\beta(\Sigma_i)$ (constant factors cancel in the acceptance ratio).
A candidate outside the prior box is rejected; otherwise it is accepted with probability
proportional to
$\big[\,0.5\,\mathcal N(\theta;x,I)+0.5\,\mathcal N(\theta;x,0.01I)\,\big]^{\beta}/G_\beta(\theta)$,
where $G_\beta$ denotes the envelope density.
This yields $10{,}000$ i.i.d.\ draws per temperature.

\noindent \textbf{SLCP.}
We use a parallel–tempering random–walk Metropolis–Hastings sampler on the
5-dimensional box prior $\theta \sim \mathcal{U}([-3,3]^5)$.
For each target temperature $\beta$ we construct a geometric ladder of
$R=200$ inverse temperatures between $\beta_{\min}=0.01$ and $\beta$,
ordered from hot to cold.
Each replica runs a Gaussian random–walk within the prior box; proposals
outside the box are rejected.
The per–coordinate proposal scale at temperature $\beta_r$ is set to
$0.2/\sqrt{\beta_r}$ and is further adapted during burn–in via a
Robbins–Monro schedule to target an average acceptance rate of $0.25$.
Between swap attempts, each chain performs $5$ local MH updates, and every
two iterations we attempt swaps between neighboring temperatures using an
even–odd scheme.
The temperature ladder is initialized by drawing $2048$ prior samples and
selecting the $R$ highest–likelihood points as starting states.
We run $15\,000$ iterations in total, discard the first $5\,000$ as
burn–in, and then retain every cold–chain state, yielding $10\,000$ draws
from the power posterior $p_\beta(\theta \mid x)$ for each~$\beta$.

\medskip

\noindent \textbf{Lorenz--96.}
For the Lorenz–96 task we run a single–chain random–walk
Metropolis–Hastings sampler on the 3-dimensional box prior
$b_0 \sim \mathcal{U}[1.4,2.2]$, $b_1 \sim \mathcal{U}[0.1,1.0]$,
$\sigma_e \sim \mathcal{U}[1.5,2.5]$.
Likelihoods are evaluated via precomputed sufficient statistics of the
discretized Gaussian transition model, so that each evaluation is
$\mathcal{O}(1)$ in the number of particles.
For each $\beta$ we initialize the chain by drawing $1500$ prior samples,
computing their likelihoods, and starting from the maximum–likelihood
point.
Proposals are Gaussian random–walks with independent components scaled to
one sixth of the corresponding prior width; they are reflected at the box
boundaries to enforce the prior support.
During a tuning phase of $6\,000$ iterations we adapt a single global step
factor using a Robbins–Monro scheme to target an acceptance rate of $0.3$,
with the acceptance probability based on the power posterior
$p_\beta(\theta \mid x) \propto \pi(\theta)\,p(x \mid \theta)^\beta$ (the
uniform prior cancels within the box).
After tuning, we fix the step size and run $10\,000$ further MH iterations,
collecting all states as reference draws from the power posterior for the
given~$\beta$.

\section{Hodgkin--Huxley model}
\label{app:hh_model}

We follow the single-compartment Hodgkin--Huxley (HH) model used in prior SBI work
\citep{gonccalves2020training,pospischil2008minimal}.
The model describes the membrane potential dynamics of a neuron under a fixed current injection protocol via
standard conductance-based equations with voltage-dependent gating variables.

\paragraph{Parameters.}
The simulator involves eight unknown parameters,
\(
\theta = (g_{\mathrm{Na}}, g_{\mathrm{K}}, g_{\mathrm{M}}, g_{\mathrm{leak}}, -V_T, -E_{\mathrm{leak}}, t_{\max}, \sigma) \in \mathbb{R}^{8},
\)
where \(g_{\mathrm{Na}}\), \(g_{\mathrm{K}}\), \(g_{\mathrm{M}}\), and \(g_{\mathrm{leak}}\) are maximal conductances for sodium, delayed-rectifier potassium,
slow voltage-dependent potassium (M-type), and leak channels, respectively;
\(-V_T\) denotes an effective spiking threshold parameter;
\(-E_{\mathrm{leak}}\) is the leak reversal potential;
\(t_{\max}\) controls the time scale of the stimulus (or the recording window) in the benchmark implementation; and
\(\sigma\) denotes the observation noise level.
(All parameter bounds and priors are taken from the benchmark specification.)

\paragraph{Observations and summary statistics.}
Given \(\theta\), the simulator produces a membrane-voltage trace \(v(t)\), which we map to a 7-dimensional vector of
summary statistics \(x \in \mathbb{R}^{7}\).
Specifically, we use the spike count, the mean resting potential, the standard deviation of the resting potential,
and the first four central moments of the voltage trace (mean, standard deviation, skewness, and kurtosis),
following \citep{gonccalves2020training}.
These summary statistics constitute the observation used by all inference methods in the HH experiments.

\newpage

\section{Additional visualizations: effect of $\beta$ on the posterior}
\label{app:qualitative-beta}

To build intuition for how the power posterior changes with the tempering parameter $\beta$,
we visualize samples from (i) a reference sampler and (ii) our two amortized routes on two toy problems (Gaussian mixture and two moons). In all panels, rows correspond to different $\beta$ values, and columns compare the reference power posterior with Route~A and Route~B. As $\beta$ decreases, the posterior becomes flatter and closer to the prior;
as $\beta$ increases, it concentrates around data-fitting regions.

\begin{figure*}[!h]
  \centering
  \includegraphics[width=0.95\textwidth]{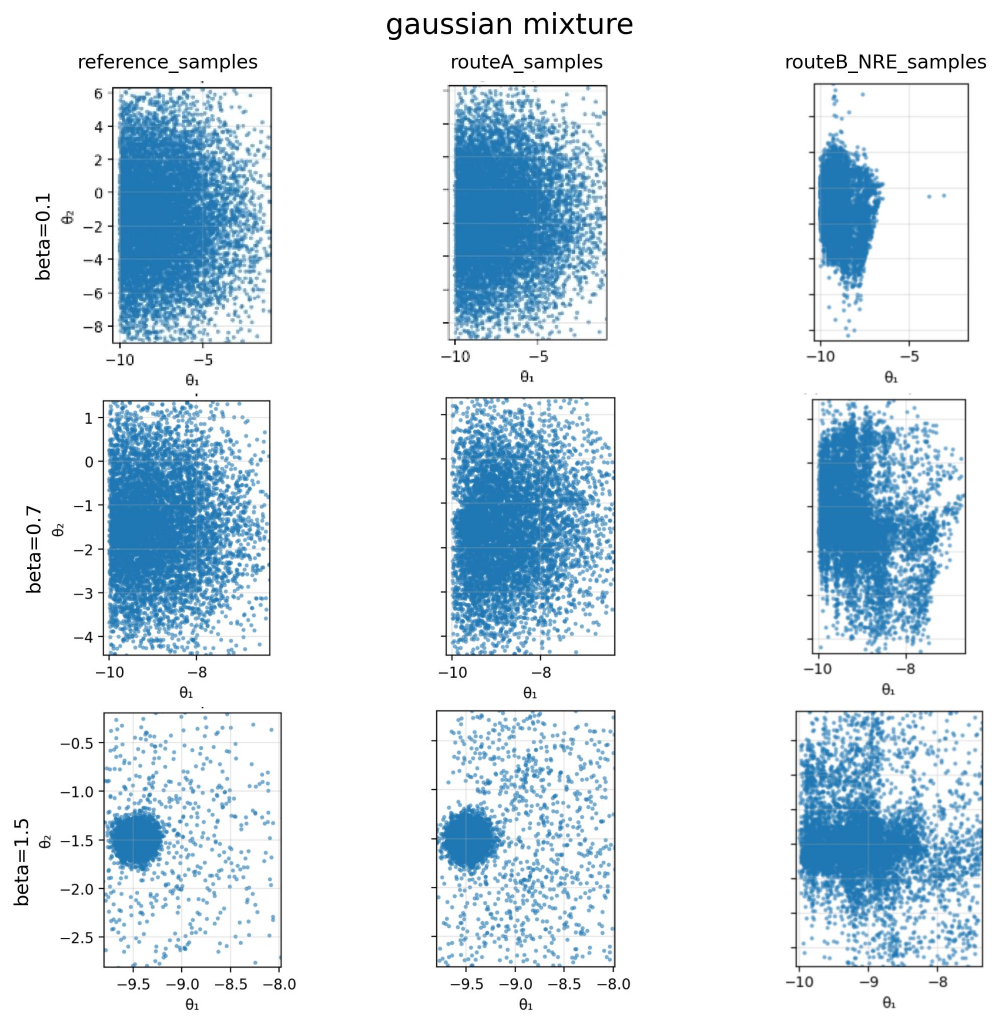}
  \caption{\textbf{Gaussian mixture.} Qualitative effect of the power posterior across different $\beta$ values. Rows correspond to $\beta \in \{0.1, 0.7, 1.5\}$ and columns show samples from the reference power posterior (left), Route~A (middle), and Route~B (NRE-SNIS; right).}

  \label{fig:appendix-power-gmm}
\end{figure*}

\begin{figure*}[!t]
  \centering
  \includegraphics[width=0.95\textwidth]{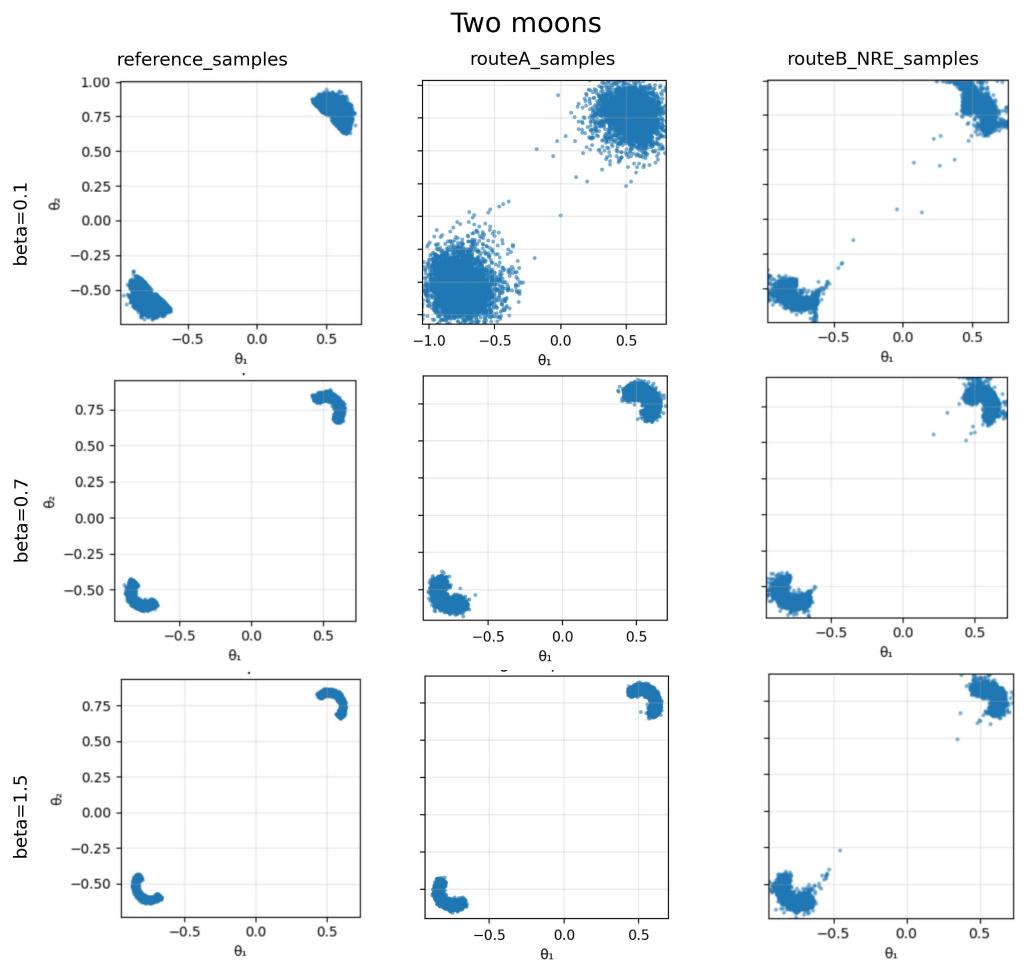}
  \caption{\textbf{Two moons.} Qualitative effect of the power posterior across different $\beta$ values. Rows correspond to $\beta \in \{0.1, 0.7, 1.5\}$ and columns show samples from the reference power posterior (left), Route~A (middle), and Route~B (NRE-SNIS; right).}

  \label{fig:appendix-power-twomoons}
\end{figure*}

\end{document}